\documentclass{article}

\usepackage{fieldnotes}

\usepackage[utf8]{inputenc}
\usepackage[T1]{fontenc}
\usepackage[hidelinks]{hyperref}
\usepackage{url}
\usepackage{booktabs}
\usepackage{amsmath,amsfonts}
\usepackage{microtype}
\usepackage{graphicx}

\usepackage{amsmath}
\usepackage{amsthm}
\usepackage{amssymb}
\usepackage{tikz}
\usepackage[section]{placeins}
\usepackage{authblk}
\usepackage{float}
\usepackage[backend=biber,style=numeric,sorting=none]{biblatex}

\newtheorem{proposition}{Proposition}
\newtheorem{theorem}{Theorem}
\newtheorem{lemma}{Lemma}
\newtheorem{definition}{Definition}
\newtheorem{example}{Example}
\newtheorem{remark}{Remark}
\newtheorem{corollary}{Corollary}

\DeclareMathOperator{\Conf}{Conf}
\DeclareMathOperator{\UConf}{UConf}

\preprintlabel{Preprint}

\preprintmeta{March 2026}
\runningtitle{Quotient Geometry and Persistence-Stable Metrics for Swarm Configurations}
\subtitle{}

\title{Quotient Geometry and Persistence-Stable Metrics for Swarm Configurations}

\author{
  \textbf{Mark M. Bailey} \\
  AI, Cyber, Influence and Data Science Department \\
  Director, Biological and Computational Intelligence Center\\
  National Intelligence University\\
  Bethesda, MD, USA\\
  \texttt{mark.m.bailey@ni-u.edu}
}

\begin{document}
\maketitle

\begin{abstract}
Swarm and constellation reconfiguration can be viewed as motion of an unordered point configuration in an ambient space. Here, we provide persistence-stable, symmetry-invariant geometric representations for comparing and monitoring multi-agent configuration data. We introduce a quotient formation space $\mathcal{S}_n(M,G)=M^n/(G\times S_n)$ and a formation matching metric $d_{M,G}$ obtained by optimizing a worst-case assignment error over ambient symmetries $g\in G$ and relabelings $\sigma\in S_n$. This metric is a structured, physically interpretable relaxation of Gromov--Hausdorff distance: the induced inter-agent metric spaces satisfy $d_{\mathrm{GH}}(X_x,X_y)\le d_{M,G}([x],[y])$. Composing this bound with stability of Vietoris--Rips persistence yields $d_B(\Phi_k([x]),\Phi_k([y]))\le d_{M,G}([x],[y])$, providing persistence-stable signatures for reconfiguration monitoring. We analyze the metric geometry of $(\mathcal{S}_n(M,G),d_{M,G})$: under compactness/completeness assumptions on $M$ and compact $G$ it is compact/complete and the metric induces the quotient topology; if $M$ is geodesic then the quotient is geodesic and exhibits stratified singularities along collision and symmetry strata, relating it to classical configuration spaces. We study expressivity of the signatures, identifying symmetry-mismatch and persistence-compression mechanisms for non-injectivity. Finally, in a phase-circle model we prove a conditional inverse theorem: under semicircle support and a gap-labeling margin, the $H_0$ signature is locally bi-Lipschitz to $d_{M,G}$ up to an explicit factor, yielding two-sided control. Examples on $\mathbb{S}^2$ and $\mathbb{T}^m$ illustrate satellite-constellation and formation settings.
\end{abstract}

\bigskip
\noindent\textbf{MSC2020:}
Primary 55N31; Secondary 54E35, 54E45, 37N99.

\keywords{persistent homology, configuration spaces, Gromov--Hausdorff distance, symmetry reduction, swarm dynamics, metric geometry.}

\section{Background and Motivation}
\label{sec:background}

\paragraph{Swarmed systems and formation geometry.}
Many engineered and natural systems consist of large collections of agents whose performance depends on \emph{relative} geometry: examples include coordinated satellite constellations, drone swarms, and mobile sensor networks.
A central theme in formation control is to represent and regulate collective behavior using only local interactions, often described by time-varying graphs and constraints on inter-agent bearings, displacements, or distances; see the survey \cite{oh2015survey} and foundational results on consensus under switching interaction topologies \cite{ren2005consensus}.
In such settings, the \emph{shape} of a swarm is typically meaningful only up to global Euclidean motions (and frequently reflections, depending on sensing), and reconfiguration tasks amount to moving through a space of feasible configurations.

From a mathematical viewpoint, a swarm snapshot can be modeled as a finite subset
\[
X(t)=\{x_1(t),\dots,x_n(t)\}\subset \mathbb{R}^d,
\]
possibly with additional state variables (velocities, headings, internal modes, etc.).
The geometric information of interest is rarely the absolute embedding, but rather the \emph{equivalence class} of $X(t)$ under symmetries such as translations and rotations, and often under \emph{relabeling} of agents (interchangeability).
Topological perspectives on reconfiguration emphasize precisely this idea: a system’s admissible states form a configuration space whose paths encode reconfiguration processes \cite{ghristpeterson2007reconfiguration}.

\paragraph{Shape spaces, invariances, and unlabeled point configurations.}
The problem of comparing configurations modulo nuisance transformations has a long history in statistical shape analysis.
In the classical \emph{landmark} setting, Procrustes methods compare configurations of labeled points up to similarity transformations and provide tractable models for inference on shape spaces \cite{goodall1991procrustes}.
Swarmed systems, however, often differ from the landmark setting in two important ways:
(i) agents may be \emph{unlabeled} (or labels may be unreliable under occlusion/communication dropout), and
(ii) the effective number of agents may change over time (due to failures, additions, or changing participation).
These features push the geometry toward comparisons between \emph{finite metric(-measure) spaces} where correspondences are latent.
Distances inspired by Gromov--Hausdorff and optimal transport ideas---notably Gromov--Wasserstein---provide intrinsic ways to compare objects without fixed point-to-point correspondences \cite{memoli2011gromovwasserstein}.
More recently, there has been progress on polynomial-time computable metrics for \emph{unordered} point clouds with continuity guarantees, addressing computational barriers that arise from permutation invariance \cite{kurlin2024atomicclouds}.
Such developments suggest that there is room for mathematically principled, symmetry-aware distances tailored to swarmed configurations.

\paragraph{Persistent homology as a multiscale geometric-topological descriptor.}
Topological data analysis (TDA) provides tools for summarizing the global structure of point cloud data across scales.
Persistent homology constructs a filtration of simplicial complexes (e.g., Vietoris--Rips or \v{C}ech complexes) from a metric space and tracks the birth and death of homological features as a scale parameter varies.
The resulting persistence modules and their barcode/diagram representations were developed alongside efficient algorithms \cite{edelsbrunner2002topological,zomorodian2005computing} and are now standard components of the computational topology toolkit \cite{edelsbrunnerharer2010computational}.
Survey treatments emphasize persistent homology as a robust, multiscale notion of ``shape'' for data \cite{ghrist2008barcodes,carlsson2009topology}.

For applications to swarmed systems, \emph{stability} is a key requirement: small perturbations of the data (sensor noise, discretization, mild agent deviations) should not cause large changes in summaries.
A cornerstone result is the stability of persistence diagrams under perturbations of functions, measured via the bottleneck distance \cite{cohensteiner2007stability}.
Complementary stability theorems show that persistent homology of geometric filtrations built on metric data is controlled under perturbations in the underlying metric space, including bounds in terms of Gromov--Hausdorff distance \cite{chazal2014persistence}, and the modern persistence-module viewpoint sharpens and generalizes these statements via interleavings \cite{chazal2016structure}.
These results make persistence-based quantities natural candidates for capturing the evolving geometry of formations in a way that is provably robust.

\paragraph{Persistence-based shape comparison and intrinsic signatures.}
Beyond summarization, there is a substantial body of work connecting persistent homology to \emph{shape comparison}.
In particular, persistence diagrams of carefully chosen filtrations yield signatures that are stable under Gromov--Hausdorff perturbations of finite metric spaces, enabling coordinate-free comparison of shapes represented as point clouds \cite{chazal2009gromov}.
Another influential construction is the persistent homology transform (PHT), which aggregates persistence diagrams over directions to obtain an injective representation (under suitable assumptions) and thereby induces metrics and statistical procedures on certain shape spaces \cite{turner2014pht}.
These approaches demonstrate that persistent homology can support rigorous metrics and embeddings for geometric objects; however, much of the existing theory is framed around \emph{static} shapes, whereas swarmed systems demand a framework that is explicitly time-dependent and symmetry-aware (rigid motions and relabeling).

\paragraph{TDA for networks and collective motion.}
TDA has already proved valuable in distributed and multi-agent contexts.
For instance, persistent homology has been used to certify coverage properties in sensor networks without requiring full localization, by leveraging topological criteria derived from connectivity information \cite{desilvaghrist2007coverage}.
In the study of collective dynamics, persistent homology has been applied to time-varying point clouds arising from aggregation models (e.g., flocking and swarming), revealing dynamical transitions and structural events not captured by classical order parameters \cite{topaz2015aggregation}.
These examples reinforce the promise of persistent invariants for multi-agent data, while also highlighting the need for a general mathematical framework that treats \emph{reconfiguration} as an object of study rather than an incidental phenomenon.

\paragraph{Motivation for this study.}
Taken together, formation control emphasizes symmetry-invariant geometry \cite{oh2015survey,ren2005consensus}, reconfiguration theory highlights configuration spaces and paths \cite{ghristpeterson2007reconfiguration}, and TDA supplies stable multiscale descriptors for metric data \cite{cohensteiner2007stability,chazal2014persistence,chazal2016structure}.
What is comparatively underdeveloped is a unified, technically explicit framework for tracking and comparing \emph{time-varying swarm geometries} that is simultaneously
(i) intrinsic/coordinate-free,
(ii) invariant under global rigid motions and agent relabeling, and
(iii) stable under realistic perturbations.
This gap motivates developing persistence-based metrics and invariants on suitable quotient spaces of swarm configurations, and analyzing how these quantities behave along reconfiguration trajectories, with an eye toward both mathematical structure and practical interpretability.

\paragraph{Contributions and organization.}
While the stability inequality in Theorem~\ref{thm:main-stability} is obtained by composing two standard
ingredients (a GH upper bound and a persistence stability theorem), the paper's main contribution is to
identify a \emph{physically structured} quotient metric on swarm configurations for which this composition
is directly applicable, and then to analyze the resulting quotient geometry and signature expressivity.
Concretely:
\begin{enumerate}
\item We introduce the quotient formation space $\mathcal{S}_n(M,G)$ and the formation matching metric
$d_{M,G}$, and interpret $d_{M,G}$ as an orbit metric induced by an $\ell^\infty$ product structure
(Section~\ref{sec:defs-main}--\ref{subsec:metric-quotient}).
\item We prove that $d_{M,G}$ upper-bounds the Gromov--Hausdorff distance between induced inter-agent
metric spaces and hence yields a GH-relaxed, symmetry-aware comparison tool
(Lemma~\ref{lem:GH-upper-bound}).
\item We deduce that Vietoris--Rips persistence diagrams of the induced metric space give
$1$-Lipschitz, symmetry- and relabeling-invariant signatures for formation trajectories
(Theorem~\ref{thm:main-stability} and Corollary~\ref{cor:path-lipschitz}).
\item We study the metric geometry of $\big(\mathcal{S}_n(M,G),d_{M,G}\big)$, including
compactness/completeness/geodesicity and the stratified singularities coming from collisions and
symmetry (Section~\ref{sec:quotient-geometry}).
\item We analyze expressivity limits of persistence signatures (symmetry mismatch and persistence
compression) and prove an inverse result on a structured stratum in the phase-circle model
(Sections~\ref{sec:separation} and \ref{sec:inverse-phase}).
\end{enumerate}

The paper is organized as follows. Section~\ref{sec:defs-main} defines the quotient space and proves the
stability theorem; Section~\ref{sec:quotient-geometry} develops the quotient metric geometry;
Sections~\ref{sec:separation}--\ref{sec:inverse-phase} study separation and give a conditional inverse
theorem; Section~\ref{sec:discussion} discusses implications and extensions.

\section{Definitions and a GH-relaxed stability theorem}
\label{sec:defs-main}

\subsection{Configurations modulo symmetry and relabeling}

Throughout, we model a swarm/constellation configuration as $n$ (unordered) points in an
ambient metric space $(M,d_M)$, modulo (i) a physically meaningful symmetry group of the
ambient space and (ii) relabeling of indistinguishable agents.

\begin{definition}[Ambient symmetry group]
Let $(M,d_M)$ be a metric space and let $G \le \mathrm{Isom}(M)$ be a subgroup of its
isometry group.  We interpret $G$ as the \emph{global} symmetry that should not change the
formation ``shape'' (e.g.\ a global rigid rotation, or a global phase shift).
\end{definition}

\begin{definition}[Configuration space and shape space]
Fix $n\in \mathbb{N}$. The \emph{labeled configuration space} is $M^n$.
Write $x=(x_1,\dots,x_n)\in M^n$.

The group $G$ acts diagonally on $M^n$ by
\[
g\cdot (x_1,\dots,x_n) := (g x_1,\dots, g x_n),
\qquad g\in G,
\]
and the symmetric group $S_n$ acts by relabeling
\[
\sigma\cdot (x_1,\dots,x_n) := (x_{\sigma^{-1}(1)},\dots,x_{\sigma^{-1}(n)}),
\qquad \sigma\in S_n.
\]
We define the \emph{(unlabeled) shape space} to be the orbit space
\[
\mathcal{S}_n(M,G) \;:=\; M^n / (G\times S_n),
\]
and we denote the orbit of $x\in M^n$ by $[x]\in \mathcal{S}_n(M,G)$.
\end{definition}

\begin{definition}[Formation matching distance]
\label{def:formation-distance}
For $[x],[y]\in \mathcal{S}_n(M,G)$, define
\begin{equation}
\label{eq:dMG}
d_{M,G}([x],[y])
\;:=\;
\inf_{g\in G,\;\sigma\in S_n}\;\;
\max_{1\le i\le n} d_M\!\big(gx_i,\,y_{\sigma(i)}\big).
\end{equation}
\end{definition}

\begin{remark}[Interpretation and relation to Procrustes-type distances]
When $M=\mathbb{R}^d$ and $G$ is the Euclidean group, \eqref{eq:dMG} is a rigid
alignment-plus-assignment objective closely related to Procrustes-style shape distances
(with the crucial difference that we are measuring a \emph{worst-case} ($\ell^\infty$)
assignment error rather than a sum-of-squares objective); see, e.g., \cite{goodall1991procrustes}.
The main point here is that the optimization is over a \emph{structured} subgroup $G$
and \emph{bijective} relabelings $\sigma$, rather than over the full class of Gromov--Hausdorff
correspondences (cf.\ \S\ref{subsec:gh-relaxation} below).
\end{remark}

\begin{remark}[Relation to bottleneck matching and $L_\infty$ optimal transport]
For fixed $g\in G$, the inner optimization
\[
\inf_{\sigma\in S_n}\max_{1\le i\le n} d_M\!\big(gx_i,\,y_{\sigma(i)}\big)
\]
is the classical \emph{bottleneck matching} distance between the two $n$-point sets
$\{gx_1,\dots,gx_n\}$ and $\{y_1,\dots,y_n\}$ in $(M,d_M)$ (also known as the $L_\infty$
assignment/transport cost); see, e.g., \cite{efrat2001bottleneck}.
Equivalently, it is the $L_\infty$-Wasserstein distance between the uniform atomic measures
$\frac1n\sum_i \delta_{gx_i}$ and $\frac1n\sum_i \delta_{y_i}$; see \cite{villani2009optimal}.
Thus $d_{M,G}$ can be viewed as a group-optimized bottleneck matching distance.
\end{remark}

\begin{remark}[Metric property]
In general, $d_{M,G}$ is always a pseudometric on $\mathcal{S}_n(M,G)$.
In the two ambient models we use most (compact $M$ and compact $G$), the infimum in
\eqref{eq:dMG} is attained and $d_{M,G}([x],[y])=0$ implies $[x]=[y]$, so $d_{M,G}$
is a genuine metric.
\end{remark}

\subsection{Two ambient models: sphere and phase torus}
\label{subsec:ambient-models}

We now record two concrete ambient models that will be used repeatedly as motivating
examples (satellite constellations, formations, and swarms).

\begin{example}[Spherical ambient model]
\label{ex:sphere}
Let $M=\mathbb{S}^2\subset \mathbb{R}^3$ with the great-circle (geodesic) distance
\[
d_{\mathbb{S}^2}(u,v) := \arccos(\langle u,v\rangle)\in[0,\pi].
\]
Let $G=\mathrm{SO}(3)$ act by rotations.
A configuration $x\in(\mathbb{S}^2)^n$ can represent, e.g., subsatellite points or line-of-sight
directions on the celestial sphere; global rigid rotations (choice of inertial frame) and relabeling
are then natural invariances.  Walker-type constellation constructions are classical in this setting
\cite{walker1984constellations}.
The resulting formation distance is $d_{\mathbb{S}^2,\mathrm{SO}(3)}$.
\end{example}

\begin{example}[Phase-torus ambient model]
\label{ex:torus}
Let $M=\mathbb{T}^m := (\mathbb{R}/2\pi\mathbb{Z})^m$, equipped with the standard flat
geodesic distance.  Concretely, for $\theta,\phi\in \mathbb{T}^m$ choose representatives in
$(-\pi,\pi]^m$ via component-wise wrapping and set
\[
d_{\mathbb{T}^m}(\theta,\phi) := \left\| \mathrm{wrap}(\theta-\phi)\right\|_2.
\]
Let $G=\mathbb{T}^m$ act by translations:
$g\cdot\theta := \theta+g$ (mod $2\pi$), which are isometries of the flat torus.

In constellation design, phase-like orbital variables (e.g.\ mean anomaly $M$ and right ascension
of the ascending node $\Omega$) are angles mod $2\pi$, and symmetric phasing patterns can be
represented as lattices on $\mathbb{T}^2\cong \mathbb{S}^1\times \mathbb{S}^1$
\cite{avendano2013lattice2d}; this framework contains Walker/Mozhaev constellations as special cases
\cite{avendano2013lattice2d,walker1984constellations} and supports reconfiguration viewpoints
\cite{davis2010reconfiguration}.
In this model, quotienting by $G$ removes an arbitrary global phase reference.
The resulting formation distance is $d_{\mathbb{T}^m,\mathbb{T}^m}$.
\end{example}

\subsection{A restricted (ambient, bijective) relaxation of Gromov--Hausdorff}
\label{subsec:gh-relaxation}

We now make precise the sense in which $d_{M,G}$ is a structured \emph{upper bound}
(proxy) for the Gromov--Hausdorff (GH) distance of the induced inter-agent metric spaces.
This is the main ``relaxed GH computation'' angle: we replace the full GH optimization over
arbitrary correspondences by a constrained optimization over ambient isometries and bijections.

\begin{definition}[Induced inter-agent metric space]
\label{def:induced-metric-space}
Given $x=(x_1,\dots,x_n)\in M^n$, define the finite metric space
\[
X_x := (\{1,\dots,n\}, d_x),
\qquad
d_x(i,j) := d_M(x_i,x_j).
\]
Note that $X_x$ depends only on the orbit $[x]\in \mathcal{S}_n(M,G)$ because
ambient isometries preserve $d_M$ and permutations only relabel indices.
\end{definition}

\begin{definition}[Gromov--Hausdorff distance via correspondences]
Let $(X,d_X)$ and $(Y,d_Y)$ be compact metric spaces.
A \emph{correspondence} $R\subseteq X\times Y$ is a relation such that every $x\in X$ is
related to at least one $y\in Y$ and every $y\in Y$ is related to at least one $x\in X$.
Its \emph{distortion} is
\[
\mathrm{dis}(R) := \sup\big\{
\big|d_X(x,x')-d_Y(y,y')\big| : (x,y),(x',y')\in R
\big\}.
\]
Then
\[
d_{\mathrm{GH}}(X,Y) := \frac12 \inf_{R} \mathrm{dis}(R),
\]
where the infimum ranges over all correspondences $R$ between $X$ and $Y$.
\end{definition}

\begin{remark}[Computational motivation]
Even for finite metric spaces, optimizing over correspondences in $d_{\mathrm{GH}}$ leads to
NP-hard combinatorial optimization problems \cite{memoli2012propertiesgh}, and there are strong
hardness-of-approximation results even in structured settings such as metric trees
\cite{agarwal2018gh_trees}; see also related complexity discussions in \cite{memoli_smith_wan2023_ultrametricgh}.
This motivates computable surrogates and variants (including modified GH-type distances and
related constructions) \cite{memoli2012propertiesgh,memoli2011gromovwasserstein}.
Our distance $d_{M,G}$ is a \emph{physically structured} upper bound tailored to fixed-size swarms:
it restricts to bijective matchings (agent-to-agent) and to ambient symmetries.
\end{remark}

\begin{lemma}[$d_{M,G}$ upper bounds GH distance]
\label{lem:GH-upper-bound}
Let $x,y\in M^n$. Then
\[
d_{\mathrm{GH}}(X_x,X_y) \;\le\; d_{M,G}([x],[y]).
\]
\end{lemma}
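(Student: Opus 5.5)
The plan is to exhibit, for each fixed pair $(g,\sigma)\in G\times S_n$, an explicit correspondence between $X_x$ and $X_y$ whose distortion is at most twice the matching cost
\[
\varepsilon(g,\sigma) := \max_{1\le i\le n} d_M\big(gx_i, y_{\sigma(i)}\big),
\]
and then take the infimum over $(g,\sigma)$. Because $d_{\mathrm{GH}}$ is itself an infimum over correspondences, this gives $d_{\mathrm{GH}}(X_x,X_y)\le \varepsilon(g,\sigma)$ for every $(g,\sigma)$. Passing to the infimum yields the claim. One point needs noting: the right-hand side depends only on the orbits. The quantity $d_{M,G}([x],[y])$ is defined via representatives, but the bound will hold for whichever representatives $x,y$ are used. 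Moreover, $X_x$ and $X_y$ are isometric for different representatives (Definition~\ref{def:induced-metric-space}), so nothing about well-definedness needs to be resolved here.

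The construction proceeds as follows. Fix $g\in G$ and $\sigma\in S_n$, and let $R_\sigma := \{(i,\sigma(i)) : 1\le i\le n\}\subseteq \{1,\dots,n\}\times\{1,\dots,n\}$. Since $\sigma$ is a bijection, $R_\sigma$ is a correspondence. For any two pairs $(i,\sigma(i))$ and $(j,\sigma(j))$, we use that $g$ is an isometry to write $d_x(i,j)=d_M(x_i,x_j)=d_M(gx_i,gx_j)$. The triangle inequality in $M$, applied twice, then gives
\[
\big|d_M(gx_i,gx_j)-d_M(y_{\sigma(i)},y_{\sigma(j)})\big| \le d_M(gx_i,y_{\sigma(i)})+d_M(gx_j,y_{\sigma(j)}) \le 2\,\varepsilon(g,\sigma).
\]
Hence $\mathrm{dis}(R_\sigma)\le 2\varepsilon(g,\sigma)$, and so $d_{\mathrm{GH}}(X_x,X_y)\le \tfrac12\,\mathrm{dis}(R_\sigma)\le \varepsilon(g,\sigma)$.

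To finish, we take the infimum over $(g,\sigma)$. No attainment of the infimum is needed, since the inequality holds for every $(g,\sigma)$ and therefore passes to the infimum directly. I do not expect a genuine obstacle. The only step requiring care is the reverse-triangle estimate above, specifically making sure that the isometry $g$ is applied to both $x_i$ and $x_j$, so that the comparison takes place between $gx$ and $y$ inside the single ambient space $M$.
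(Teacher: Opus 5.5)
Your proof is correct and matches the paper's argument: fix $(g,\sigma)$, take the graph of $\sigma$ as the correspondence, use that $g$ is an isometry together with the triangle inequality to bound its distortion by $2\varepsilon(g,\sigma)$, and then pass to the infimum over $(g,\sigma)$. As you note, no attainment of the infimum is needed, and the paper's proof does not use it either.
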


\begin{proof}
Fix $g\in G$ and $\sigma\in S_n$, and suppose
\[
\max_{1\le i\le n} d_M(gx_i, y_{\sigma(i)}) \le \varepsilon.
\]
Consider the correspondence $R\subseteq \{1,\dots,n\}\times\{1,\dots,n\}$ given by the graph of
$\sigma$:
\[
R := \{(i,\sigma(i)) : i=1,\dots,n\}.
\]
For any $i,j$, by the triangle inequality and the fact that $g$ is an isometry,
\begin{align*}
\big|d_x(i,j) - d_y(\sigma(i),\sigma(j))\big|
&= \big|d_M(x_i,x_j) - d_M(y_{\sigma(i)},y_{\sigma(j)})\big| \\
&= \big|d_M(gx_i,gx_j) - d_M(y_{\sigma(i)},y_{\sigma(j)})\big| \\
&\le d_M(gx_i,y_{\sigma(i)}) + d_M(gx_j,y_{\sigma(j)}) \\
&\le 2\varepsilon.
\end{align*}
Hence $\mathrm{dis}(R)\le 2\varepsilon$ and therefore
$d_{\mathrm{GH}}(X_x,X_y)\le \varepsilon$.
Taking the infimum over $(g,\sigma)$ yields the claim.
\end{proof}

\subsection{Persistent-homology signature and the main theorem}

We now define the (Vietoris--Rips) persistent-homology signature of a formation and prove
that it is Lipschitz with respect to the GH-relaxed distance $d_{M,G}$.

\begin{definition}[Vietoris--Rips filtration and persistence diagram]
\label{def:rips}
Let $(X,d)$ be a finite metric space and $\alpha\ge 0$.
We define the Vietoris--Rips complex at scale $\alpha$ by
\[
\mathrm{Rips}_\alpha(X)
:= \big\{ \sigma\subseteq X : \mathrm{diam}(\sigma)\le 2\alpha \big\},
\qquad
\mathrm{diam}(\sigma):=\max_{x,x'\in \sigma} d(x,x').
\]
As $\alpha$ increases, $\mathrm{Rips}_\alpha(X)$ forms a filtration.
Fix a coefficient field $\Bbbk$. For $k\ge 0$, let $\mathrm{Dgm}_k(X)$ denote the $k^{th}$ persistence diagram of the
(unreduced) simplicial homology $H_k(\mathrm{Rips}_\alpha(X);\Bbbk)$, with essential classes recorded at death time
$+\infty$. Let $d_B$ denote the bottleneck distance between persistence diagrams.
\end{definition}

\begin{remark}[Scaling convention for $\mathrm{Rips}_\alpha$]
The factor $2\alpha$ in the definition $\mathrm{diam}(\sigma)\le 2\alpha$ is a common normalization in which
$\alpha$ plays the role of a ``radius'' parameter. With this convention, the standard GH-stability bound for
Vietoris--Rips persistence can be written with constant $1$:
\[
d_B\big(\mathrm{Dgm}_k(X),\mathrm{Dgm}_k(Y)\big)\le d_{\mathrm{GH}}(X,Y),
\]
see, e.g., \cite{chazal2009gromov,chazal2014persistence}.
\end{remark}

\begin{definition}[Formation-to-diagram map]
Fix $k\ge 0$. Define the map
\[
\Phi_k : \mathcal{S}_n(M,G) \longrightarrow \{\text{persistence diagrams}\},
\qquad
\Phi_k([x]) := \mathrm{Dgm}_k(X_x),
\]
where $X_x$ is the induced metric space from Definition~\ref{def:induced-metric-space}.
\end{definition}

\begin{theorem}[Main theorem: GH-relaxed stability of formation persistence]
\label{thm:main-stability}
For every $k\ge 0$ and all $[x],[y]\in \mathcal{S}_n(M,G)$,
\[
d_B\big(\Phi_k([x]),\Phi_k([y])\big)
\;\le\;
d_{M,G}([x],[y]).
\]
In particular, the Vietoris--Rips persistence diagrams of the induced inter-agent metric space
are invariant under ambient symmetries $G$ and agent relabelings, and are $1$-Lipschitz with
respect to the formation distance.
\end{theorem}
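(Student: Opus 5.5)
The plan is to factor the inequality through the Gromov--Hausdorff distance between the induced inter-agent metric spaces:
\[
d_B\big(\Phi_k([x]),\Phi_k([y])\big)=d_B\big(\mathrm{Dgm}_k(X_x),\mathrm{Dgm}_k(X_y)\big)\le d_{\mathrm{GH}}(X_x,X_y)\le d_{M,G}([x],[y]).
\]
The second inequality is exactly Lemma~\ref{lem:GH-upper-bound}. The first is the standard Gromov--Hausdorff stability theorem for Vietoris--Rips persistence of finite metric spaces \cite{chazal2009gromov,chazal2014persistence}. Under the radius convention $\mathrm{diam}(\sigma)\le 2\alpha$ recorded in the scaling remark, this theorem holds with constant $1$. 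Chaining the two inequalities gives the bound.

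Before chaining, I will check that $\Phi_k$ is well defined on orbits. If $y=\sigma\cdot(g\cdot x)$, then $d_y(i,j)=d_M(gx_{\sigma^{-1}(i)},gx_{\sigma^{-1}(j)})=d_x(\sigma^{-1}(i),\sigma^{-1}(j))$, because $g$ is an isometry. Hence $\sigma^{-1}$ is an isometry $X_y\to X_x$. Isometric finite metric spaces have simplicially isomorphic Rips filtrations, and therefore identical diagrams. This gives the invariance claim under $G$ and relabelings, and it also shows that $\Phi_k$ descends to $\mathcal{S}_n(M,G)$.

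The $1$-Lipschitz statement is the displayed chain. In case $d_{M,G}$ is only a pseudometric, the inequality still holds verbatim: the infimum in Lemma~\ref{lem:GH-upper-bound} already handles non-attainment.

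The only step needing care is matching conventions in the cited stability theorem. That theorem is usually stated via an interleaving argument. A correspondence $R$ with $\mathrm{dis}(R)\le 2\varepsilon$ induces simplicial maps $\mathrm{Rips}_\alpha(X)\to\mathrm{Rips}_{\alpha+\varepsilon}(Y)$ and back, because diameters grow by at most $2\varepsilon$, which is exactly $\varepsilon$ in the radius parameter. These maps are contiguous in composition with the inclusions, so the persistence modules are $\varepsilon$-interleaved. The isometry theorem then gives $d_B\le\varepsilon$.

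I will note explicitly that essential classes behave correctly. With unreduced homology, $H_0$ has exactly one infinite bar on each side. For $k\ge1$, finite Rips complexes eventually become a full simplex, so there are no essential classes. Essential points therefore match at cost $|0-0|=0$ in birth, and the bottleneck bound is unaffected. I expect no real obstacle beyond this bookkeeping.
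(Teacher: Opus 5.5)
Your proposal is correct and is essentially the paper's proof: the paper likewise chains the cited Gromov--Hausdorff stability of Vietoris--Rips persistence, $d_B(\mathrm{Dgm}_k(X_x),\mathrm{Dgm}_k(X_y))\le d_{\mathrm{GH}}(X_x,X_y)$, with Lemma~\ref{lem:GH-upper-bound}. Your extra checks (well-definedness of $\Phi_k$ on orbits, the $\varepsilon$-interleaving sketch under the radius convention, and the bookkeeping for essential classes) make explicit what the paper leaves to its citations and to the note in Definition~\ref{def:induced-metric-space}.
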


\begin{proof}
By the Gromov--Hausdorff stability of Vietoris--Rips persistence (in bottleneck distance),
\[
d_B\big(\mathrm{Dgm}_k(X_x),\mathrm{Dgm}_k(X_y)\big)
\;\le\;
d_{\mathrm{GH}}(X_x,X_y),
\]
see, e.g., \cite{chazal2009gromov,chazal2014persistence} (and foundational stability results for
persistence in \cite{cohensteiner2007stability}).
Combining this with Lemma~\ref{lem:GH-upper-bound} yields
\[
d_B\big(\Phi_k([x]),\Phi_k([y])\big)
=
d_B\big(\mathrm{Dgm}_k(X_x),\mathrm{Dgm}_k(X_y)\big)
\le d_{\mathrm{GH}}(X_x,X_y)
\le d_{M,G}([x],[y]),
\]
as claimed.
\end{proof}

\begin{corollary}[Continuity along reconfiguration paths]
\label{cor:path-lipschitz}
Let $t\mapsto [x(t)]$ be a time-parameterized formation path in $\mathcal{S}_n(M,G)$.
If $[x(\cdot)]$ is $L$-Lipschitz in $d_{M,G}$, i.e.
\[
d_{M,G}([x(t)],[x(t')]) \le L|t-t'|,
\]
then for each $k$ the diagram path $t\mapsto \Phi_k([x(t)])$ is also $L$-Lipschitz in
bottleneck distance:
\[
d_B\big(\Phi_k([x(t)]),\Phi_k([x(t')])\big) \le L|t-t'|.
\]
\end{corollary}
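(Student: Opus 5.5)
The corollary follows by composing the Lipschitz hypothesis with Theorem~\ref{thm:main-stability}, applied pointwise in time. Fix $k\ge 0$ and two times $t,t'$ in the parameter domain. Apply Theorem~\ref{thm:main-stability} to the pair of orbits $[x(t)],[x(t')]\in\mathcal{S}_n(M,G)$ to obtain
\[
d_B\big(\Phi_k([x(t)]),\Phi_k([x(t')])\big)\;\le\; d_{M,G}([x(t)],[x(t')]).
\]
Then chain this with the assumed bound $d_{M,G}([x(t)],[x(t')])\le L|t-t'|$ to get the stated inequality.

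Before this, I will check one point: $\Phi_k$ is well defined on orbits, so the diagram path $t\mapsto\Phi_k([x(t)])$ makes sense. This is recorded in Definition~\ref{def:induced-metric-space}. The metric space $X_x$ depends on $x$ only up to isometry within its $G\times S_n$-orbit, so $\mathrm{Dgm}_k(X_x)$ depends only on $[x]$. It does not matter which representative $x(t)$ of the path is chosen at each time.

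I do not expect a genuine obstacle. The only subtlety is that $d_{M,G}$ may be only a pseudometric in general, as the remark on the metric property notes. However, Theorem~\ref{thm:main-stability} is stated for all orbits without any compactness hypothesis, and the Lipschitz condition is phrased directly in terms of $d_{M,G}$. The argument therefore goes through without assuming that the infimum in \eqref{eq:dMG} is attained or that $d_{M,G}$ separates points.

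If $t\mapsto[x(t)]$ is instead only locally Lipschitz, the same chaining gives the local statement. Since $d_B$ satisfies the triangle inequality, a subdivision argument would also recover global bounds along paths of finite length.
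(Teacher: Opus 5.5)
Your proof is correct and is exactly the intended argument: apply Theorem~\ref{thm:main-stability} to the pair $[x(t)],[x(t')]$ and chain the result with the Lipschitz hypothesis, which is why the paper states the corollary without a separate proof. Your side remarks are accurate: $\Phi_k$ is well defined on orbits, and no compactness or genuine-metric assumption on $d_{M,G}$ is needed.
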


\begin{remark}[Why Examples~\ref{ex:sphere}--\ref{ex:torus} matter]
Theorem~\ref{thm:main-stability} is ambient-model agnostic, but Examples~\ref{ex:sphere}
and \ref{ex:torus} highlight two \emph{structurally different} symmetry quotients that occur in
formation problems:
\begin{itemize}
\item on $\mathbb{S}^2$ we quotient by a \emph{noncommutative} rotation group ($\mathrm{SO}(3)$),
capturing invariance to global frame choice (useful for ``geometry on the sky/earth'');
\item on $\mathbb{T}^m$ we quotient by an \emph{abelian} translation group ($\mathbb{T}^m$),
capturing invariance to global phase shifts (useful for orbit phasing and lattice descriptions).
\end{itemize}
In both cases, $d_{M,G}$ provides a structured surrogate for $d_{\mathrm{GH}}$ that is aligned
with the physics of the swarm and immediately inherits topological stability.
\end{remark}

\section{Metric and topological properties of the quotient formation space}
\label{sec:quotient-geometry}

This section studies the intrinsic geometry of the quotient formation space
\[
\big(\mathcal{S}_n(M,G),\, d_{M,G}\big),
\qquad
\mathcal{S}_n(M,G):=M^n/(G\times S_n),
\]
introduced in Section~\ref{sec:defs-main}.  Here we move to a geometrically meaningful metric space in a way that supports both the spherical ambient model (Example~\ref{ex:sphere}) and the phase-torus model (Example~\ref{ex:torus}).
Throughout we emphasize that $d_{M,G}$ is a \emph{structured relaxation} of Gromov--Hausdorff
comparisons: rather than optimizing over all correspondences (Definition~\ref{def:induced-metric-space}),
we optimize only over ambient symmetries and bijective relabelings (Lemma~\ref{lem:GH-upper-bound}).

\subsection{A metric-quotient viewpoint}
\label{subsec:metric-quotient}

We first identify $d_{M,G}$ as the canonical orbit metric induced by an isometric action on a
product space.

\begin{definition}[$\ell^\infty$ product metric on $M^n$]
\label{def:dinfty}
For $x=(x_1,\dots,x_n)$ and $y=(y_1,\dots,y_n)$ in $M^n$, define
\[
d_\infty(x,y) \;:=\; \max_{1\le i\le n} d_M(x_i,y_i).
\]
\end{definition}

\begin{lemma}[Isometric action]
\label{lem:isometric-action}
Let $H:=G\times S_n$ act on $M^n$ by
\[
(g,\sigma)\cdot (x_1,\dots,x_n) \;:=\; (g x_{\sigma^{-1}(1)},\dots, g x_{\sigma^{-1}(n)}).
\]
Then this action is by isometries of $(M^n,d_\infty)$.
\end{lemma}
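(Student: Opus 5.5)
The plan is a direct computation from Definition~\ref{def:dinfty}. First I will check that the formula defines a group action of $H=G\times S_n$ on $M^n$. The $G$-part and the $S_n$-part commute: $g$ acts coordinatewise, while $\sigma$ only permutes coordinates. Then I will verify the composition law
\[
(g,\sigma)\cdot\big((h,\tau)\cdot x\big)=(gh,\sigma\tau)\cdot x .
\]
To do this, write $((h,\tau)\cdot x)_j = h x_{\tau^{-1}(j)}$. Applying $(g,\sigma)$ gives the $i$-th coordinate $g h x_{\tau^{-1}(\sigma^{-1}(i))} = (gh)x_{(\sigma\tau)^{-1}(i)}$. This is exactly the $i$-th coordinate of $(gh,\sigma\tau)\cdot x$, and the identity element clearly acts trivially.

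Second, I will show each $(g,\sigma)$ preserves $d_\infty$. Fix $x,y\in M^n$. The $i$-th coordinates of the two images are $gx_{\sigma^{-1}(i)}$ and $gy_{\sigma^{-1}(i)}$. Since $g\in G\le\mathrm{Isom}(M)$, we have $d_M(gx_{\sigma^{-1}(i)},gy_{\sigma^{-1}(i)})=d_M(x_{\sigma^{-1}(i)},y_{\sigma^{-1}(i)})$. Taking the maximum over $i$ and reindexing by $j=\sigma^{-1}(i)$, which is a bijection of $\{1,\dots,n\}$, gives
\[
d_\infty\big((g,\sigma)\cdot x,(g,\sigma)\cdot y\big)=\max_j d_M(x_j,y_j)=d_\infty(x,y).
\]
The point to stress is that $x$ and $y$ are permuted by the \emph{same} $\sigma$, so coordinates stay paired after the permutation. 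Since each $(g,\sigma)$ has inverse $(g^{-1},\sigma^{-1})$, each acts as a bijective isometry.

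I don't expect a genuine obstacle. The only step needing care is the bookkeeping of $\sigma^{-1}$ in the composition law. If one only needs ``by isometries'' in the weak sense, that step can even be skipped, since preservation of $d_\infty$ holds map by map. For later use I would also note that the orbit pseudometric $\inf_{h\in H} d_\infty(hx,y)$ coincides with \eqref{eq:dMG}. Indeed, $d_\infty((g,\sigma)\cdot x,y)=\max_i d_M(gx_{\sigma^{-1}(i)},y_i)=\max_j d_M(gx_j,y_{\sigma(j)})$, again by reindexing with $j=\sigma^{-1}(i)$.
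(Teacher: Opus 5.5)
Your proposal is correct and takes the same approach as the paper: the key step in both is the coordinatewise computation $d_\infty(h\cdot x,h\cdot y)=\max_i d_M(gx_{\sigma^{-1}(i)},gy_{\sigma^{-1}(i)})=\max_i d_M(x_{\sigma^{-1}(i)},y_{\sigma^{-1}(i)})=d_\infty(x,y)$. Your extra checks (the composition law, that each map is bijective, and that the orbit pseudometric agrees with \eqref{eq:dMG}) are valid additions the paper omits or defers to Proposition~\ref{prop:quotient-metric-formula}.
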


\begin{proof}
Let $h=(g,\sigma)\in H$. Using that $g$ is an isometry of $(M,d_M)$ and that $d_\infty$
is defined by a maximum over coordinates, we compute
\[
d_\infty(h\cdot x, h\cdot y)
= \max_i d_M(g x_{\sigma^{-1}(i)},\, g y_{\sigma^{-1}(i)})
= \max_i d_M(x_{\sigma^{-1}(i)},\, y_{\sigma^{-1}(i)})
= d_\infty(x,y).
\]
\end{proof}

\begin{proposition}[Quotient metric formula]
\label{prop:quotient-metric-formula}
For $x,y\in M^n$,
\[
d_{M,G}([x],[y]) \;=\; \inf_{h\in H} d_\infty(x, h\cdot y).
\]
In particular, $d_{M,G}$ is the natural orbit (pseudo)metric induced by the isometric action
of $H$ on $(M^n,d_\infty)$.
\end{proposition}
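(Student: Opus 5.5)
The plan is to unwind both sides and match terms through a change of variables in the group. Take $h=(g,\sigma)\in H$. By the action in Lemma~\ref{lem:isometric-action},
\[
d_\infty(x,h\cdot y)=\max_i d_M\big(x_i,\,g y_{\sigma^{-1}(i)}\big).
\]
Since $g$ is an isometry, this equals $\max_i d_M(g^{-1}x_i, y_{\sigma^{-1}(i)})$. The substitution $g'=g^{-1}$, $\sigma'=\sigma^{-1}$ is a bijection of $G\times S_n$ onto itself. Hence the infimum over $h$ becomes
\[
\inf_{g'\in G,\ \sigma'\in S_n}\ \max_i d_M\big(g'x_i,\,y_{\sigma'(i)}\big),
\]
which is exactly \eqref{eq:dMG}. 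I will write out this chain of equalities explicitly, since it is the whole content of the formula.

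Before that, I need to check that \eqref{eq:dMG} is well defined on orbits, i.e.\ independent of the representatives $x,y$. Replacing $x$ by $h_1\cdot x$ and $y$ by $h_2\cdot y$ should leave the value unchanged. By Lemma~\ref{lem:isometric-action},
\[
d_\infty(h_1 x, h\, h_2 y)=d_\infty\big(x,\ h_1^{-1}h h_2\, y\big),
\]
and $h\mapsto h_1^{-1}hh_2$ is a bijection of $H$, so the infimum over $h$ is unchanged. Because the formula has already been identified with the right-hand side, this also establishes well-definedness of $d_{M,G}$.

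For the final sentence, that $d_{M,G}$ is ``the natural orbit pseudometric'', I will note that $\inf_{h,h'} d_\infty(hx,h'y)=\inf_h d_\infty(x,hy)$ by the same invariance. The pseudometric axioms then follow in the standard way. Symmetry comes from $d_\infty(x,hy)=d_\infty(h^{-1}x,y)$. For the triangle inequality, choose near-optimal $h_1$ for $(x,y)$ and $h_2$ for $(y,z)$, and bound $d_\infty(x,h_1h_2z)\le d_\infty(x,h_1y)+d_\infty(h_1y,h_1h_2z)$, where the second term equals $d_\infty(y,h_2 z)$ by isometry.

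There is no substantive obstacle here. The only point needing care is bookkeeping of inverses, making sure that the action's $\sigma^{-1}$ convention and the placement of $g$ on $y$ rather than $x$ are correctly absorbed by reindexing over the group.
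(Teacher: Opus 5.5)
Your proposal is correct and follows essentially the same route as the paper: unwind the action, use that $g$ is an isometry, and reindex over $G\times S_n$. The paper moves the permutation onto $x$ and then trades $g$ for $g^{-1}$, while you move $g^{-1}$ onto $x$ and substitute $(g^{-1},\sigma^{-1})$; your extra checks of well-definedness on orbits and of the pseudometric axioms are not in the paper's proof but are correct.
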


\begin{proof}
Writing $h=(g,\sigma)$, we have
\[
d_\infty(x,h\cdot y)
= \max_{i} d_M\!\big(x_i,\, g y_{\sigma^{-1}(i)}\big)
= \max_{j} d_M\!\big(x_{\sigma(j)},\, g y_j\big).
\]
Taking the infimum over $(g,\sigma)$ and renaming $\sigma(j)\mapsto i$ yields the same
optimization as in Definition~\ref{def:formation-distance}, up to applying $g^{-1}$ instead
of $g$ (which does not change the infimum since $G$ is a group of isometries).
\end{proof}

\subsection{Attainment and nondegeneracy (compact group actions)}
\label{subsec:attainment}

The orbit-distance formula above involves an infimum.  For the two ambient models of interest
($\mathbb{S}^2$ with $G=\mathrm{SO}(3)$, and $\mathbb{T}^m$ with $G=\mathbb{T}^m$),
the symmetry group is compact; in this regime the distance is well behaved.

\begin{proposition}[Attainment of optimal alignments]
\label{prop:attainment}
Assume $G$ is compact.  Then for any $x,y\in M^n$ the infimum in
Definition~\ref{def:formation-distance} (equivalently Proposition~\ref{prop:quotient-metric-formula})
is attained: there exist $g^\ast\in G$ and $\sigma^\ast\in S_n$ such that
\[
d_{M,G}([x],[y]) = \max_i d_M\!\big(g^\ast x_i,\, y_{\sigma^\ast(i)}\big).
\]
\end{proposition}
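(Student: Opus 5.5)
The plan is to reduce the attainment problem to continuity of an objective on a compact set. First, since $S_n$ is finite, the infimum in Definition~\ref{def:formation-distance} can be written as
\[
d_{M,G}([x],[y]) = \min_{\sigma\in S_n}\, \inf_{g\in G} F_\sigma(g),
\qquad
F_\sigma(g) := \max_{1\le i\le n} d_M\big(gx_i,\,y_{\sigma(i)}\big).
\]
It therefore suffices to show that, for each fixed $\sigma$, the inner infimum over $G$ is attained. Once we have minimizers $g_\sigma$ for every $\sigma$, we choose $\sigma^\ast$ minimizing the finitely many values $F_\sigma(g_\sigma)$ and set $g^\ast=g_{\sigma^\ast}$.

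For fixed $\sigma$, we use the fact that $G$ is compact. Here the topology on $G$ is the one implicit in the statement: either a compact topological group acting continuously on $M$, or a subgroup of $\mathrm{Isom}(M)$ with the compact-open topology, which on isometries coincides with pointwise convergence. In either reading, each orbit map $g\mapsto gx_i$ is continuous $G\to M$. For the second reading this is immediate: if $g_\alpha\to g$ pointwise, then $d_M(g_\alpha x_i, gx_i)\to 0$.

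Given this, each term $g\mapsto d_M(gx_i,y_{\sigma(i)})$ is continuous as a composition with the continuous (indeed $1$-Lipschitz) function $d_M(\cdot,y_{\sigma(i)})$. A finite maximum of continuous functions is continuous, so $F_\sigma$ is continuous on the compact space $G$. It therefore attains its infimum at some $g_\sigma\in G$, by the extreme value theorem. Compact spaces need not be metrizable here, but the extreme value theorem holds for continuous real-valued functions on arbitrary compact spaces, so nets or sequences are not needed.

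The only real obstacle is the continuity of the action map $g\mapsto gx_i$, that is, fixing what ``$G$ compact'' means. I would state explicitly that $G$ carries a topology making the action $G\times M\to M$ (or at least each orbit map) continuous, as is the case for $\mathrm{SO}(3)$ acting on $\mathbb{S}^2$ and $\mathbb{T}^m$ acting on itself by translation. I would also remark that the pointwise-convergence topology on $\mathrm{Isom}(M)$ always provides this continuity.

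The equivalent formulation of Proposition~\ref{prop:quotient-metric-formula} then follows by the same argument applied to $h\mapsto d_\infty(x,h\cdot y)$ on the compact group $H=G\times S_n$.
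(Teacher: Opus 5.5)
Your proof is correct and follows the same route as the paper: for each fixed $\sigma$, the function $g\mapsto \max_i d_M(gx_i,y_{\sigma(i)})$ is continuous on the compact group $G$ and so attains its minimum, and the result then follows by minimizing over the finitely many $\sigma\in S_n$. You also state explicitly that compactness of $G$ must come with continuity of the orbit maps $g\mapsto gx_i$, which holds automatically in the pointwise-convergence topology on isometries; this makes precise an assumption the paper uses only implicitly.
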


\begin{proof}
For each fixed $\sigma\in S_n$, the function
\[
f_\sigma(g) := \max_i d_M\!\big(gx_i,\,y_{\sigma(i)}\big)
\]
is continuous in $g$ (maximum of finitely many continuous functions).
Since $G$ is compact, $f_\sigma$ attains its minimum on $G$.  Since $S_n$ is finite,
the overall infimum over $(g,\sigma)\in G\times S_n$ is the minimum of finitely many attained minima.
\end{proof}

\begin{proposition}[Metric property under compactness]
\label{prop:metric-property}
Assume $G$ is compact and the action of $G$ on $M$ is continuous.  Then $d_{M,G}$ is a
\emph{metric} on $\mathcal{S}_n(M,G)$ (not merely a pseudometric).
\end{proposition}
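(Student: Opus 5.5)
We first check that $d_{M,G}$ is well defined on orbits and is a pseudometric. Then we show nondegeneracy using attainment (Proposition~\ref{prop:attainment}).

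\textbf{Well-definedness and pseudometric axioms.} By Proposition~\ref{prop:quotient-metric-formula}, $d_{M,G}([x],[y])=\inf_{h\in H} d_\infty(x,h\cdot y)$. Here $H=G\times S_n$ acts by $d_\infty$-isometries (Lemma~\ref{lem:isometric-action}).

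\begin{itemize}
\item \emph{Independence of representatives.} Replacing $y$ by $h'\cdot y$ reindexes the infimum over the group $H$. Replacing $x$ by $h'\cdot x$ gives $d_\infty(h'x,hy)=d_\infty(x,h'^{-1}hy)$, which is again a reindexing.
\item \emph{Symmetry.} This follows from $d_\infty(x,hy)=d_\infty(h^{-1}x,y)$.
\item \emph{Triangle inequality.} For $h_1,h_2\in H$,
\[
d_\infty(x,h_1h_2z)\le d_\infty(x,h_1y)+d_\infty(h_1y,h_1h_2z)=d_\infty(x,h_1y)+d_\infty(y,h_2z).
\]
Taking infima over $h_1$ and $h_2$ gives the inequality.
\end{itemize}

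These steps use only the group structure and isometry, not compactness.

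\textbf{Nondegeneracy.} Suppose $d_{M,G}([x],[y])=0$. By Proposition~\ref{prop:attainment}, which uses compactness of $G$, there exist $g^\ast\in G$ and $\sigma^\ast\in S_n$ with
\[
\max_i d_M(g^\ast x_i,y_{\sigma^\ast(i)})=0.
\]
Since $d_M$ is a metric, $g^\ast x_i=y_{\sigma^\ast(i)}$ for all $i$. Thus $y$ equals $(g^\ast,\sigma^\ast)\cdot x$ up to the paper's indexing convention for the $S_n$-action, so $y\in H\cdot x$ and $[x]=[y]$.

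\textbf{Where the topology hypothesis enters.} The delicate point is the continuity of $g\mapsto d_M(gx_i,y_{\sigma(i)})$ used inside Proposition~\ref{prop:attainment}. This is exactly what the hypothesis that the action $G\times M\to M$ is continuous supplies: orbit maps $g\mapsto gx_i$ are continuous, and $d_M$ is continuous. I would state this dependence explicitly, since Proposition~\ref{prop:attainment} invoked it only implicitly.

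Without attainment, the argument would instead run as follows. Take $g_k$ with $g_kx_i\to y_{\sigma(i)}$ for a fixed $\sigma$, which is possible by pigeonhole over the finite group $S_n$. Extract a convergent subnet $g_k\to g$ using compactness of $G$. Then continuity of the action gives $gx_i=y_{\sigma(i)}$. This is the main step and the only place compactness is essential.
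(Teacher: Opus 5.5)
Your proposal is correct and essentially matches the paper's proof. The paper argues nondegeneracy directly: it fixes $\sigma$ using finiteness of $S_n$, extracts a convergent subsequence $g_k\to g_\infty$ by compactness, and passes to the limit using continuity of the action, which is exactly your fallback argument (your subnet version is marginally more general, and routing through Proposition~\ref{prop:attainment} is the same compactness argument repackaged). Your explicit check of the pseudometric axioms via Proposition~\ref{prop:quotient-metric-formula} is a welcome addition, since the paper simply asserts that only nondegeneracy requires work.
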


\begin{proof}
Only the implication $d_{M,G}([x],[y])=0\Rightarrow [x]=[y]$ requires work.
If $d_{M,G}([x],[y])=0$, there exist $(g_k,\sigma_k)$ with
$\max_i d_M(g_k x_i, y_{\sigma_k(i)})\to 0$.
Since $G$ is compact and $S_n$ is finite, after passing to a subsequence we may assume
$g_k\to g_\infty\in G$ and $\sigma_k\equiv \sigma_\infty$.
By continuity of the action and of $d_M$ we obtain $d_M(g_\infty x_i, y_{\sigma_\infty(i)})=0$
for all $i$, hence $y_{\sigma_\infty(i)}=g_\infty x_i$ for all $i$, i.e.\ $[x]=[y]$.
\end{proof}

\subsection{Topology induced by $d_{M,G}$}
\label{subsec:quotient-topology}

A common concern with quotient constructions is whether a proposed metric induces the
intended quotient topology.  Under compactness (hence closed orbits) this holds in our setting.

\begin{proposition}[Metric topology equals quotient topology]
\label{prop:topology}
Assume $G$ is compact and acts continuously by isometries on $(M,d_M)$.
Let $\pi: M^n\to \mathcal{S}_n(M,G)$ be the orbit projection.
Then $d_{M,G}$ induces the quotient topology on $\mathcal{S}_n(M,G)$; equivalently,
$U\subseteq \mathcal{S}_n(M,G)$ is open iff $\pi^{-1}(U)\subseteq M^n$ is open.
\end{proposition}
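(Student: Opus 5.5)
The plan is to compare the two topologies directly, using the orbit-metric description from Proposition~\ref{prop:quotient-metric-formula}: $d_{M,G}(\pi(x),\pi(y))=\inf_{h\in H} d_\infty(x,h\cdot y)$, where $H=G\times S_n$ acts isometrically on $(M^n,d_\infty)$ by Lemma~\ref{lem:isometric-action}. Compactness of $G$ enters through Proposition~\ref{prop:metric-property}, which makes $d_{M,G}$ a genuine metric, so its metric topology is Hausdorff and well defined. It remains to show that a set $U\subseteq\mathcal{S}_n(M,G)$ is $d_{M,G}$-open iff $\pi^{-1}(U)$ is open in $M^n$. The product topology on $M^n$ coincides with the $d_\infty$-topology, since $d_\infty$ is the max of finitely many coordinate metrics.

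\textbf{Step 1 (metric-open $\Rightarrow$ quotient-open).} Taking $h=\mathrm{id}$ in the formula gives $d_{M,G}(\pi(x),\pi(y))\le d_\infty(x,y)$, so $\pi$ is $1$-Lipschitz and hence continuous from $(M^n,d_\infty)$ to $(\mathcal{S}_n,d_{M,G})$. Consequently, if $U$ is $d_{M,G}$-open, then $\pi^{-1}(U)$ is open.

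\textbf{Step 2 (quotient-open $\Rightarrow$ metric-open).} Suppose $\pi^{-1}(U)$ is open and fix $[x]\in U$. We need some $r>0$ with $B_{d_{M,G}}([x],r)\subseteq U$. The key identity is
\[
\pi^{-1}\big(B_{d_{M,G}}([x],r)\big)=\bigcup_{h\in H} B_{d_\infty}(h\cdot x,r),
\]
which follows from the infimum formula together with the isometry of the action: $d_\infty(x,h\cdot y)=d_\infty(h^{-1}\cdot x,y)$. Since $\pi^{-1}(U)$ is $H$-invariant and contains $x$, it suffices to find one $r$ with $B_{d_\infty}(x,r)\subseteq\pi^{-1}(U)$. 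Once we have it, applying $h$ gives $B_{d_\infty}(h\cdot x,r)=h\cdot B_{d_\infty}(x,r)\subseteq\pi^{-1}(U)$ for every $h\in H$. Such an $r$ exists because $\pi^{-1}(U)$ is open and contains $x$. Hence $B_{d_{M,G}}([x],r)\subseteq U$.

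On this route compactness is needed only for Hausdorffness and nondegeneracy, not for the uniformity of $r$; the isometric action supplies that uniformity automatically. The step I expect to be the main obstacle is verifying the displayed identity carefully. The ``$\subseteq$'' direction uses that $d_{M,G}([x],[y])<r$ yields some $h$ with $d_\infty(x,h\cdot y)<r$, because the infimum is strict. Then $h\cdot y\in B(x,r)$, so $y\in B(h^{-1}\cdot x,r)$. The ``$\supseteq$'' direction is immediate from the infimum. One should also check that $\pi$ is surjective and that open sets in the quotient topology are, by definition, exactly those with open preimage, so no further topology on $H$ is required.
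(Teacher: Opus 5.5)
Your proof is correct and follows the paper's argument. Step~1 is exactly the paper's $1$-Lipschitz observation, and Step~2 is the same ball argument; the paper just skips your preimage identity and the $H$-invariance, noting directly that $d_{M,G}([x],[y])<\varepsilon$ gives some $h$ with $h\cdot y\in B_{d_\infty}(x,\varepsilon)\subseteq\pi^{-1}(U)$, whence $[y]=\pi(h\cdot y)\in U$.
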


\begin{proof}
First, $\pi$ is $1$-Lipschitz:
for any $x,y\in M^n$,
\[
d_{M,G}(\pi(x),\pi(y)) = \inf_{h\in H} d_\infty(x,h\cdot y) \le d_\infty(x,y),
\]
so $\pi$ is continuous, hence the $d_{M,G}$-topology is no finer than the quotient topology.

Conversely, let $U$ be open in the quotient topology and pick $[x]\in U$.
Then $\pi^{-1}(U)$ is open in $M^n$, so there exists $\varepsilon>0$ such that
$B_{d_\infty}(x,\varepsilon)\subseteq \pi^{-1}(U)$.
If $[y]$ satisfies $d_{M,G}([x],[y])<\varepsilon$, then there exists $h\in H$ with
$d_\infty(x,h\cdot y)<\varepsilon$, i.e.\ $h\cdot y\in B_{d_\infty}(x,\varepsilon)\subseteq \pi^{-1}(U)$.
Since $\pi(h\cdot y)=[y]$, it follows that $[y]\in U$.
Thus $B_{d_{M,G}}([x],\varepsilon)\subseteq U$, proving $U$ is open in the metric topology.
\end{proof}

\subsection{Compactness and completeness}
\label{subsec:compact-complete}

We now record basic global properties.

\begin{lemma}[Product-space compactness/completeness]
\label{lem:product-compact-complete}
If $(M,d_M)$ is compact (resp.\ complete), then $(M^n,d_\infty)$ is compact (resp.\ complete).
\end{lemma}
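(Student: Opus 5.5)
The plan is to handle compactness and completeness separately, in both cases reducing to coordinatewise statements about $(M,d_M)$. The key observation is that $d_\infty$ is the maximum of the coordinate distances. This gives, for every $i$,
\[
d_M(x_i,y_i)\le d_\infty(x,y)\le \sum_{j=1}^n d_M(x_j,y_j).
\]
So $d_\infty$-convergence (resp.\ the $d_\infty$-Cauchy property) of a sequence in $M^n$ is equivalent to convergence (resp.\ the Cauchy property) of each coordinate sequence in $M$. In particular, $d_\infty$ induces the product topology on $M^n$.

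\textbf{Completeness.} Let $(x^{(k)})_k$ be $d_\infty$-Cauchy in $M^n$. By the left-hand inequality, each coordinate sequence $(x^{(k)}_i)_k$ is Cauchy in $M$. Since $M$ is complete, it converges to some limit $x_i\in M$. Set $x=(x_1,\dots,x_n)$. Given $\varepsilon>0$, choose $K_i$ with $d_M(x^{(k)}_i,x_i)<\varepsilon$ for $k\ge K_i$. Then for $k\ge \max_i K_i$ we get $d_\infty(x^{(k)},x)<\varepsilon$, so $x^{(k)}\to x$ in $d_\infty$. Taking the maximum of finitely many indices is what makes this work.

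\textbf{Compactness.} Since $(M^n,d_\infty)$ is a metric space, it suffices to prove sequential compactness. Take any sequence $(x^{(k)})_k$ in $M^n$ and extract subsequences in $n$ successive steps. First pass to a subsequence along which the first coordinate converges, using that $M$ is compact metric and hence sequentially compact. From that subsequence, pass to a further subsequence along which the second coordinate converges, and continue through coordinate $n$. The final subsequence has every coordinate convergent, so it converges in $d_\infty$ by the argument in the completeness step. As an alternative route, one could cite Tychonoff's theorem for finite products together with the fact that $d_\infty$ metrizes the product topology.

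There is no real obstacle here. The only point needing care is that the number of factors is finite, so the maxima of the indices $K_i$ are finite and the $n$ subsequence extractions terminate.
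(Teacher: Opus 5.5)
Your proposal is correct and matches the paper's proof. The completeness argument (each coordinate sequence is Cauchy and converges, then take the maximum over finitely many indices) is the same as the paper's, and for compactness the paper uses the route you list as an alternative: $d_\infty$ induces the product topology, and finite products of compact spaces are compact, with your successive-subsequence extraction being just the elementary sequential-compactness version of that fact.
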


\begin{proof}
Compactness follows because $d_\infty$ induces the product topology on $M^n$ and finite products
of compact spaces are compact.

For completeness, if $(x^k)$ is Cauchy in $(M^n,d_\infty)$ then each coordinate sequence
$(x_i^k)$ is Cauchy in $M$ (since $d_M(x_i^k,x_i^\ell)\le d_\infty(x^k,x^\ell)$).
If $M$ is complete, each $(x_i^k)$ converges to some $x_i\in M$, and then
$x^k\to (x_1,\dots,x_n)$ in $d_\infty$.
\end{proof}

\begin{theorem}[Compactness]
\label{thm:compactness}
If $M$ is compact and $G$ is compact, then $\big(\mathcal{S}_n(M,G),d_{M,G}\big)$ is a compact
metric space.
\end{theorem}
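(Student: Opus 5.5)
The plan is to realize $\mathcal{S}_n(M,G)$ as a continuous image of a compact space and combine this with the metric property already established. By Lemma~\ref{lem:product-compact-complete}, compactness of $M$ gives compactness of $(M^n,d_\infty)$. The orbit projection $\pi:M^n\to\mathcal{S}_n(M,G)$ is $1$-Lipschitz for $d_{M,G}$: the first step of the proof of Proposition~\ref{prop:topology} gives $d_{M,G}(\pi(x),\pi(y))\le d_\infty(x,y)$ directly from Proposition~\ref{prop:quotient-metric-formula}, and this uses no hypothesis beyond the isometric action of Lemma~\ref{lem:isometric-action}. Since $\pi$ is surjective and continuous into the pseudometric space $(\mathcal{S}_n(M,G),d_{M,G})$, its image is compact.

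It remains to upgrade ``compact pseudometric space'' to ``compact metric space.'' Here I would invoke Proposition~\ref{prop:metric-property}: with $G$ compact, $d_{M,G}([x],[y])=0$ forces $[x]=[y]$, so $d_{M,G}$ is a genuine metric.

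That proposition assumes the $G$-action on $M$ is continuous, so I must check this hypothesis. This is the one point needing care. I read compactness of $G$ as compactness in a topology making $G\times M\to M$ continuous, for instance the compact-open topology on $\mathrm{Isom}(M)$; in the two ambient models this is clear. If one only assumes $G\le\mathrm{Isom}(M)$ is compact in the topology of pointwise (equivalently uniform, since $M$ is compact) convergence, then joint continuity follows from equicontinuity of isometries:
\[
d_M(g_kx_k,gx)\le d_M(g_kx_k,g_kx)+d_M(g_kx,gx)=d_M(x_k,x)+d_M(g_kx,gx)\to0 .
\]
So no additional obstacle arises.

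Alternatively, to avoid relying on Proposition~\ref{prop:metric-property}, one can prove sequential compactness directly. Given $[x^k]$, pass to a subsequence with $x^k\to x$ in $d_\infty$; then $d_{M,G}([x^k],[x])\le d_\infty(x^k,x)\to0$. This shows every sequence has a convergent subsequence, and the metric property then finishes the argument. I expect the main obstacle to be only the bookkeeping about the topology on $G$ noted above; the rest is routine.
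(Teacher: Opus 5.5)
Your proposal is correct and follows the paper's proof: compactness of $(M^n,d_\infty)$ from Lemma~\ref{lem:product-compact-complete}, compactness of the continuous image $\pi(M^n)$, and Proposition~\ref{prop:metric-property} to upgrade the pseudometric to a metric. Two small refinements are harmless. You use the $1$-Lipschitz bound on $\pi$ directly instead of citing Proposition~\ref{prop:topology}, and you check the continuity-of-action hypothesis that the paper leaves implicit.
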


\begin{proof}
By Lemma~\ref{lem:product-compact-complete}, $M^n$ is compact.
The orbit projection $\pi$ is continuous, so $\mathcal{S}_n(M,G)=\pi(M^n)$ is compact.
By Proposition~\ref{prop:metric-property} the distance $d_{M,G}$ is a metric, and by
Proposition~\ref{prop:topology} it induces the quotient topology.
\end{proof}

\begin{theorem}[Completeness]
\label{thm:completeness}
If $M$ is complete and $G$ is compact, then $\big(\mathcal{S}_n(M,G),d_{M,G}\big)$ is complete.
\end{theorem}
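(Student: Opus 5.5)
We prove completeness by lifting a Cauchy sequence in $\mathcal{S}_n(M,G)$ to a Cauchy sequence in $(M^n,d_\infty)$. We then use Lemma~\ref{lem:product-compact-complete} and the fact that $\pi$ is $1$-Lipschitz (shown in the proof of Proposition~\ref{prop:topology}). We also use that, for compact $G$, $d_{M,G}$ is a genuine metric (Proposition~\ref{prop:metric-property}), so limits are meaningful. Throughout we work with the orbit-metric description of Proposition~\ref{prop:quotient-metric-formula}.

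First reduction: a Cauchy sequence converges as soon as some subsequence does. So let $([x^k])$ be Cauchy in $d_{M,G}$ and pass to a subsequence with
\[
d_{M,G}([x^k],[x^{k+1}])<2^{-k}.
\]
Next we build lifts inductively. Set $y^1:=x^1$. Given $y^k\in\pi^{-1}([x^k])$, the orbit distance from $y^k$ to the orbit of $x^{k+1}$ equals $d_{M,G}([x^k],[x^{k+1}])$, by Proposition~\ref{prop:quotient-metric-formula} and the fact that $H$ acts isometrically (Lemma~\ref{lem:isometric-action}). Hence we may choose $h_{k+1}\in H$ with
\[
d_\infty(y^k,h_{k+1}\cdot x^{k+1})<2^{-k},
\]
and set $y^{k+1}:=h_{k+1}\cdot x^{k+1}$. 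Compactness of $G$ and Proposition~\ref{prop:attainment} even give attainment, but strict inequality with the slack $2^{-k}$ suffices and avoids needing it. The sequence $(y^k)$ then satisfies $d_\infty(y^k,y^\ell)\le\sum_{j\ge k}2^{-j}\to0$, so it is Cauchy in $(M^n,d_\infty)$.

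By Lemma~\ref{lem:product-compact-complete}, $y^k\to y$ for some $y\in M^n$. Since $\pi$ is $1$-Lipschitz,
\[
d_{M,G}([x^k],[y])=d_{M,G}(\pi(y^k),\pi(y))\le d_\infty(y^k,y)\to0.
\]
So the subsequence converges, and hence the whole Cauchy sequence converges to $[y]$.

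The only delicate point is the lifting step. We must check that the orbit distance computed from the specific representative $y^k$, rather than from $x^k$, still equals $d_{M,G}([x^k],[x^{k+1}])$. This holds because $\inf_{h}d_\infty(h'x,hy)=\inf_h d_\infty(x,h'^{-1}hy)$ by isometry of the action, so the quotient distance is independent of the chosen representative. Compactness of $G$ is used only to guarantee that $d_{M,G}$ is a metric, so that the limit is unique.
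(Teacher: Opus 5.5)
Your proof is correct and follows essentially the paper's argument: you pass to a subsequence whose consecutive $d_{M,G}$-steps are below $2^{-j}$, inductively choose representatives that are consecutively within $2^{-j}$ in $d_\infty$, use completeness of $(M^n,d_\infty)$ from Lemma~\ref{lem:product-compact-complete}, and push the limit down via the $1$-Lipschitz projection $\pi$. You also make two small improvements: you use the slack in the strict inequality rather than attainment (Proposition~\ref{prop:attainment}), and you explicitly check that the orbit distance computed from the modified representative $y^k$ still equals $d_{M,G}([x^k],[x^{k+1}])$, which the paper's inductive definition of $z^{j}$ uses only implicitly.
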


\begin{proof}
Let $([x^k])_{k\ge 1}$ be a Cauchy sequence in $\mathcal{S}_n(M,G)$.
Choose a subsequence $k_1<k_2<\cdots$ such that
\[
d_{M,G}\big([x^{k_j}],[x^{k_{j+1}}]\big) < 2^{-j}.
\]
By Proposition~\ref{prop:attainment}, for each $j$ there exists $h_j\in H$ with
\[
d_\infty\big(z^j,\, h_j\cdot x^{k_{j+1}}\big)
=
d_{M,G}\big([x^{k_j}],[x^{k_{j+1}}]\big)
< 2^{-j},
\quad\text{where } z^j:=x^{k_j}.
\]
Define inductively $z^{j+1}:=h_j\cdot x^{k_{j+1}}$.
Then $[z^j]=[x^{k_j}]$ and $d_\infty(z^j,z^{j+1})<2^{-j}$ for all $j$.
Hence $(z^j)$ is Cauchy in $(M^n,d_\infty)$ by the triangle inequality, and by
Lemma~\ref{lem:product-compact-complete} it converges to some $z^\infty\in M^n$.

Since $d_{M,G}([z^j],[z^\infty])\le d_\infty(z^j,z^\infty)\to 0$, we have
$[z^j]\to [z^\infty]$ in $\mathcal{S}_n(M,G)$.
Finally, because the original sequence $([x^k])$ is Cauchy, convergence of the subsequence
implies convergence of the full sequence to the same limit $[z^\infty]$.
\end{proof}

\begin{corollary}[Sphere and torus models yield compact, complete spaces]
\label{cor:sphere-torus-compact-complete}
In the spherical ambient model $(M,G)=(\mathbb{S}^2,\mathrm{SO}(3))$ and the phase-torus model
$(M,G)=(\mathbb{T}^m,\mathbb{T}^m)$, the quotient formation space
$\big(\mathcal{S}_n(M,G),d_{M,G}\big)$ is compact (hence complete).
\end{corollary}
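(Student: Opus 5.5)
The corollary will follow by checking the hypotheses of Theorem~\ref{thm:compactness} in each model and then noting that compact metric spaces are complete. No new geometric argument is needed; the work is in confirming that the standing assumptions of Section~\ref{sec:quotient-geometry} hold. Those assumptions are that $M$ is a compact metric space and that $G$ is a compact group acting continuously by isometries. Proposition~\ref{prop:metric-property} and Proposition~\ref{prop:topology}, which Theorem~\ref{thm:compactness} uses, require exactly these.

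For the spherical model, I will argue as follows.
\begin{itemize}
\item $\mathbb{S}^2$ is a closed and bounded subset of $\mathbb{R}^3$, so it is compact. The great-circle distance $\arccos\langle u,v\rangle$ is a metric inducing the subspace topology, because it is a continuous, strictly increasing function of the chordal distance $\|u-v\|$.
\item $\mathrm{SO}(3)$ is a closed and bounded subset of $\mathbb{R}^{3\times 3}$, so it is compact.
\item Rotations preserve inner products, hence preserve $d_{\mathbb{S}^2}$, so $\mathrm{SO}(3)\le \mathrm{Isom}(\mathbb{S}^2)$.
\item The action map $(g,u)\mapsto gu$ is the restriction of matrix multiplication, so it is jointly continuous.
\end{itemize}

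For the torus model, I will argue as follows.
\begin{itemize}
\item $\mathbb{T}^m=(\mathbb{R}/2\pi\mathbb{Z})^m$ is compact, as the continuous image of $[0,2\pi]^m$.
\item The flat distance $\|\mathrm{wrap}(\theta-\phi)\|_2$ is the quotient metric of the Euclidean metric. It induces the quotient topology and depends only on $\theta-\phi$.
\item Translations therefore preserve it, so they are isometries.
\item The action $(g,\theta)\mapsto\theta+g$ is addition in a compact topological group, so it is continuous.
\end{itemize}

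With these facts in hand, Theorem~\ref{thm:compactness} gives compactness of $(\mathcal{S}_n(M,G),d_{M,G})$ in both cases. Completeness then follows from the general fact that every compact metric space is complete: a Cauchy sequence has a convergent subsequence, and hence converges. Alternatively, since compact $M$ is complete, Theorem~\ref{thm:completeness} gives completeness directly.

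The only step needing care is making sure the stated metrics induce the standard topologies, so that "compact" refers to the right topology and the actions are continuous with respect to $d_M$. For the torus, this amounts to checking that $\mathrm{wrap}$ yields the genuine minimal-lift Euclidean distance. That distance is
\[
\min_{k\in\mathbb{Z}^m}\|\theta-\phi+2\pi k\|_2,
\]
and componentwise wrapping into $(-\pi,\pi]$ achieves the minimum. So it is a metric, it is translation-invariant, and it is locally Euclidean. I expect this to be routine.
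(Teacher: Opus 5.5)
Your proposal is correct and follows essentially the same route as the paper. The paper states this corollary without a separate proof, as an immediate consequence of Theorem~\ref{thm:compactness} (completeness following from compactness, or equivalently from Theorem~\ref{thm:completeness}); your checks that the spaces and groups are compact and that the actions are continuous and isometric simply make explicit the hypotheses the paper takes for granted.
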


\subsection{Geodesics and intrinsic (length) structure}
\label{subsec:geodesics}

Because reconfiguration is naturally modeled by paths in $\mathcal{S}_n(M,G)$, it is useful to
know whether the metric space is geodesic (and hence a length space; see, e.g., \cite{burago2001metric}).
We use the $\ell^\infty$ product metric because it matches the worst-case alignment objective in
Definition~\ref{def:formation-distance}.

\begin{lemma}[$(M^n,d_\infty)$ is geodesic when $M$ is geodesic]
\label{lem:product-geodesic}
If $(M,d_M)$ is a geodesic metric space, then $(M^n,d_\infty)$ is a geodesic metric space.
\end{lemma}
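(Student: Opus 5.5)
The plan is to build a $d_\infty$-geodesic between two points of $M^n$ coordinatewise, reparametrizing each coordinate geodesic to a common time interval. Fix $x,y\in M^n$ and set $D:=d_\infty(x,y)=\max_i d_M(x_i,y_i)$. If $D=0$ the constant path works, so assume $D>0$. For each $i$, since $M$ is geodesic, choose a geodesic $\gamma_i:[0,1]\to M$ from $x_i$ to $y_i$ parametrized proportionally to arc length. This means
\[
d_M(\gamma_i(s),\gamma_i(t)) = |s-t|\,d_M(x_i,y_i) \qquad\text{for all } s,t\in[0,1].
\]
Define $\gamma(t):=(\gamma_1(t),\dots,\gamma_n(t))\in M^n$. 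Then $\gamma(0)=x$ and $\gamma(1)=y$.

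The key computation is that $\gamma$ is a constant-speed geodesic in $(M^n,d_\infty)$. For $s,t\in[0,1]$,
\[
d_\infty(\gamma(s),\gamma(t)) = \max_i |s-t|\,d_M(x_i,y_i) = |s-t|\,D.
\]
Hence $\gamma$ is a geodesic of length $D$ joining $x$ to $y$. If one prefers unit-speed parametrization, use $t\mapsto\gamma(t/D)$ on $[0,D]$. Continuity of $\gamma$ is immediate from this identity.

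I do not expect a real obstacle here. The only point needing care is the normalization: we must use linearly reparametrized (constant-speed) geodesics rather than unit-speed ones, because the coordinates have different lengths. Without this, the maximum over coordinates would not scale exactly linearly in $|s-t|$. With the common parameter interval $[0,1]$, the $\ell^\infty$ maximum factors out $|s-t|$ exactly, which is also why the $\ell^\infty$ product metric, rather than some other product metric, gives the clean statement.
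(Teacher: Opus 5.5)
Your proposal is correct and is essentially the paper's own proof: both take constant-speed geodesics $\gamma_i:[0,1]\to M$ with $d_M(\gamma_i(s),\gamma_i(t))=d_M(x_i,y_i)\,|s-t|$ and observe that the $\ell^\infty$ maximum then equals $D|s-t|$. Your separate treatment of $D=0$ is harmless but not needed, since constant paths already satisfy the same identity with zero speed.
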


\begin{proof}
Fix $x,y\in M^n$ and let $\delta_i:=d_M(x_i,y_i)$ and $D:=\max_i \delta_i=d_\infty(x,y)$.
For each $i$ choose a constant-speed geodesic $\gamma_i:[0,1]\to M$ from $x_i$ to $y_i$
satisfying $d_M(\gamma_i(s),\gamma_i(t))=\delta_i|s-t|$.
Define $\gamma:[0,1]\to M^n$ by $\gamma(t):=(\gamma_1(t),\dots,\gamma_n(t))$.
Then
\[
d_\infty(\gamma(s),\gamma(t))=\max_i d_M(\gamma_i(s),\gamma_i(t))=\max_i \delta_i|s-t|=D|s-t|,
\]
so $\gamma$ is a minimizing geodesic.
\end{proof}

\begin{theorem}[Geodesic property of the quotient formation space]
\label{thm:quotient-geodesic}
Assume $(M,d_M)$ is geodesic and $G$ is compact.
Then $\big(\mathcal{S}_n(M,G),d_{M,G}\big)$ is a geodesic metric space.
\end{theorem}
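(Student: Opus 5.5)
The plan is to lift a geodesic from $(M^n,d_\infty)$ to the quotient, using an optimal alignment to choose representatives. Fix $[x],[y]\in\mathcal{S}_n(M,G)$ and set $D:=d_{M,G}([x],[y])$. Proposition~\ref{prop:attainment} (which uses compactness of $G$), together with the orbit formula of Proposition~\ref{prop:quotient-metric-formula}, provides $h^\ast\in H$ such that $y':=h^\ast\cdot y$ satisfies $d_\infty(x,y')=D$. Since $[y']=[y]$, we may replace $y$ by $y'$. After this replacement the representatives $x$ and $y'$ realize the quotient distance exactly.

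Next, Lemma~\ref{lem:product-geodesic} gives a constant-speed geodesic $\gamma:[0,1]\to M^n$ from $x$ to $y'$ with
\[
d_\infty(\gamma(s),\gamma(t))=D|s-t| \quad\text{for all } s,t.
\]
We then push it down and define $\bar\gamma:=\pi\circ\gamma$. Because $\pi$ is $1$-Lipschitz (as shown in the proof of Proposition~\ref{prop:topology}), we obtain
\[
d_{M,G}(\bar\gamma(s),\bar\gamma(t))\le D|s-t|.
\]
This gives the upper bound along the projected curve.

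The step that needs care is the reverse inequality, which shows that $\bar\gamma$ is actually minimizing rather than merely $D$-Lipschitz. I would use the triangle inequality in the quotient. For $0\le s\le t\le 1$,
\[
D=d_{M,G}([x],[y])\le d_{M,G}(\bar\gamma(0),\bar\gamma(s))+d_{M,G}(\bar\gamma(s),\bar\gamma(t))+d_{M,G}(\bar\gamma(t),\bar\gamma(1)).
\]
By the Lipschitz bound, the first and third terms are at most $Ds$ and $D(1-t)$. Hence the middle term is at least $D(t-s)$, so equality holds and $\bar\gamma$ is a geodesic from $[x]$ to $[y]$. Here it is essential that $d_{M,G}$ satisfies the triangle inequality; this holds because it is the orbit pseudometric of an isometric action, a fact I would verify briefly from Lemma~\ref{lem:isometric-action} by composing near-optimal group elements.

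Finally, Proposition~\ref{prop:metric-property} shows that $d_{M,G}$ is a genuine metric. That proposition assumes continuity of the action, which holds automatically because isometries are continuous and, for compact $G$, the action map $G\times M\to M$ is jointly continuous in the natural topology of $\mathrm{Isom}(M)$. So $\big(\mathcal{S}_n(M,G),d_{M,G}\big)$ is a geodesic metric space. The main obstacle is the attainment step. Without Proposition~\ref{prop:attainment} we could only choose $y'$ with $d_\infty(x,y')\le D+\varepsilon$, which would yield only a length-space conclusion. Compactness of $G$ is exactly what removes the $\varepsilon$.
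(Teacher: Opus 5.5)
Your proposal is correct and follows essentially the same route as the paper: take an optimal alignment $h^\ast$ from Proposition~\ref{prop:attainment}, and project the $d_\infty$-geodesic from $x$ to $h^\ast\cdot y$. The upper bound then comes from $\pi$ being $1$-Lipschitz and the lower bound from the triangle inequality. Your three-term triangle inequality for arbitrary $s\le t$ spells out the step the paper covers with ``similarly,'' and your checks of the triangle inequality and nondegeneracy of $d_{M,G}$ supply details the paper takes for granted.
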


\begin{proof}
Fix $[x],[y]\in \mathcal{S}_n(M,G)$ and choose $h^\ast\in H$ attaining
$d_{M,G}([x],[y])=d_\infty(x,h^\ast\cdot y)$ (Proposition~\ref{prop:attainment}).
Let $D:=d_\infty(x,h^\ast\cdot y)$.
By Lemma~\ref{lem:product-geodesic}, there is a geodesic $\gamma:[0,1]\to M^n$
from $x$ to $h^\ast\cdot y$ with $d_\infty(\gamma(s),\gamma(t))=D|s-t|$.
Projecting via $\pi$ yields a path $\bar\gamma:=\pi\circ\gamma$ from $[x]$ to $[y]$.
Since $\pi$ is $1$-Lipschitz, we have
$d_{M,G}(\bar\gamma(s),\bar\gamma(t)) \le D|s-t|$.

To show that $\bar\gamma$ is minimizing, fix $t\in(0,1)$ and apply the triangle inequality:
\[
D = d_{M,G}([x],[y])
\le d_{M,G}\big([x],\bar\gamma(t)\big) + d_{M,G}\big(\bar\gamma(t),[y]\big)
\le d_\infty\big(x,\gamma(t)\big) + d_\infty\big(\gamma(t),h^\ast\cdot y\big)
= tD + (1-t)D = D.
\]
All inequalities are equalities, hence $d_{M,G}([x],\bar\gamma(t))=tD$ and similarly
$d_{M,G}(\bar\gamma(s),\bar\gamma(t))=D|s-t|$ for all $s,t$.
Thus $\bar\gamma$ is a minimizing geodesic.
\end{proof}

\begin{corollary}[Geodesics in the spherical and torus models]
In the spherical model $(\mathbb{S}^2,\mathrm{SO}(3))$ and the phase-torus model
$(\mathbb{T}^m,\mathbb{T}^m)$, the quotient formation space is geodesic (hence a length space).
\end{corollary}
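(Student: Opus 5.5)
This follows directly from Theorem~\ref{thm:quotient-geodesic}, so the plan is to check that the hypotheses hold in both models and then invoke that theorem.

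\emph{Compactness of $G$ and continuity of the action.} In the spherical model, $G=\mathrm{SO}(3)$ is a closed and bounded subset of $\mathbb{R}^{3\times 3}$, hence compact. It acts continuously by isometries of the great-circle distance, since rotations preserve inner products and $d_{\mathbb{S}^2}=\arccos\langle u,v\rangle$. In the torus model, $G=\mathbb{T}^m$ is compact. Translations are continuous isometries of the flat metric, because $\mathrm{wrap}((\theta+g)-(\phi+g))=\mathrm{wrap}(\theta-\phi)$.

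\emph{$M$ is geodesic.} Both $\mathbb{S}^2$ and $\mathbb{T}^m$ are compact, hence complete, connected Riemannian manifolds, and their given distances are the Riemannian distances. By Hopf--Rinow, any two points are joined by a minimizing geodesic, so $(M,d_M)$ is a geodesic metric space. One can also argue directly.
\begin{itemize}
\item On $\mathbb{S}^2$, an arc of a great circle through $u$ and $v$ realizes $\arccos\langle u,v\rangle$. This arc is unique unless $u$ and $v$ are antipodal, in which case any meridian works.
\item On $\mathbb{T}^m$, let $w=\mathrm{wrap}(\phi-\theta)$. Then $t\mapsto \theta+tw$ is a path of length $\|w\|_2=d_{\mathbb{T}^m}(\theta,\phi)$. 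To see that it is minimizing, note that the wrapped difference is the shortest lift of $\phi-\theta$ to $\mathbb{R}^m$ in each coordinate, so $\|w\|_2=\min_{k\in 2\pi\mathbb{Z}^m}\|\phi-\theta+k\|_2$. Hence the flat metric is the quotient metric of $\mathbb{R}^m$ and the straight segment is minimizing.
\end{itemize}

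\emph{Conclusion.} Theorem~\ref{thm:quotient-geodesic} then gives that $(\mathcal{S}_n(M,G),d_{M,G})$ is geodesic. A geodesic space is automatically a length space: the distance between two points equals the length of a minimizing geodesic joining them, and therefore equals the infimum of lengths of all paths joining them.

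The only step needing any care is that the torus distance, defined through componentwise wrapping, really is the geodesic quotient metric. This holds because minimizing $\|\phi-\theta+k\|_2$ over $k\in 2\pi\mathbb{Z}^m$ separates across coordinates, so it reduces to minimizing each coordinate's absolute value independently, and wrapping does exactly that.
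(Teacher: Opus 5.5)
Your proposal is correct and follows the paper's route: the corollary is an immediate application of Theorem~\ref{thm:quotient-geodesic}. You correctly check its hypotheses, namely that $\mathrm{SO}(3)$ and $\mathbb{T}^m$ are compact and act continuously by isometries, and that $\mathbb{S}^2$ and $\mathbb{T}^m$ are geodesic. Your explicit check that the wrapped torus distance is the minimizing quotient metric is a harmless extra detail that the paper leaves implicit.
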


\subsection{Collision strata, orbit-type singularities, and relation to configuration spaces}
\label{subsec:singularities-conf}

The quotient $\mathcal{S}_n(M,G)$ contains configurations with coincident points.  These are
typically forbidden by physical constraints (collision avoidance) but are important for the
mathematical structure: they generate singular strata and clarify how our quotient relates to
classical configuration spaces.

\begin{definition}[Collision locus and configuration space]
Assume $M$ is Hausdorff.  The \emph{collision locus} (fat diagonal) is
\[
\Delta := \big\{ (x_1,\dots,x_n)\in M^n : x_i=x_j \text{ for some } i\neq j \big\}.
\]
The (ordered) configuration space of $n$ distinct points in $M$ is
\[
\Conf_n(M) := M^n\setminus \Delta.
\]
\end{definition}

The basic topology of $\Conf_n(M)$ and its fibrational structure are classical; see
Fadell--Neuwirth \cite{fadell1962configuration}.
Since the $S_n$-action is free on $\Conf_n(M)$, the quotient
\[
\UConf_n(M) := \Conf_n(M)/S_n
\]
is the standard \emph{unlabeled} configuration space.

\begin{definition}[Collision-free formation shape space]
Define the open subspace of collision-free shapes
\[
\mathcal{S}_n^\circ(M,G) \;:=\; \Conf_n(M)/(G\times S_n)
\;\subseteq\;
\mathcal{S}_n(M,G).
\]
\end{definition}

\begin{definition}[Stabilizers and the principal stratum]
\label{def:principal-stratum}
Let $H:=G\times S_n$ act on $M^n$ as in Lemma~\ref{lem:isometric-action}. For $x\in \Conf_n(M)$ define
its stabilizer subgroup
\[
H_x := \{h\in H : h\cdot x = x\}.
\]
When $M$ is a smooth manifold and $G$ is a compact Lie group acting smoothly, there exists a conjugacy
class of stabilizers of minimal dimension, called the \emph{principal orbit type}. The corresponding
\emph{principal stratum} is
\[
\Conf_n(M)_{\mathrm{pr}} := \{x\in \Conf_n(M): H_x \text{ is conjugate to a principal stabilizer}\}.
\]
We denote its image in the collision-free shape space by
\[
\mathcal{S}^{\circ}_{n,\mathrm{pr}}(M,G) := \Conf_n(M)_{\mathrm{pr}}/(G\times S_n)
\;\subseteq\;
\mathcal{S}_n^\circ(M,G).
\]
\end{definition}

\begin{remark}[Orbit-type stratification and singularities]
When $M$ is a smooth manifold and $G$ is a compact Lie group acting smoothly,
the quotient of a smooth $G$-manifold admits a well-developed \emph{orbit-type stratification}
(the decomposition by conjugacy class of stabilizers), described via the slice theorem
\cite{palais1961slices} and standard references on compact transformation groups
\cite{bredon1972compact}.
In the Riemannian (isometric) setting, orbit spaces are stratified metric spaces with a
rich geometry; see, e.g., \cite{alekseevsky2003orbitspaces}.
In our context the acting group is $H=G\times S_n$; since $S_n$ is finite, the orbit-type
picture carries over directly.

Concretely, singularities of $\mathcal{S}_n(M,G)$ arise from two sources:
\begin{enumerate}
\item \emph{collision strata} coming from the diagonal $\Delta$ in $M^n$, which correspond to
configurations where some agents coincide;
\item \emph{symmetry strata} coming from configurations in $\Conf_n(M)$ with nontrivial stabilizer
in $G$ (e.g.\ highly symmetric constellations), which yield orbifold-like singularities even
away from collisions.
\end{enumerate}
The principal stratum $\Conf_n(M)_{\mathrm{pr}}$ from Definition~\ref{def:principal-stratum} is open and dense
\cite{bredon1972compact,palais1961slices}; hence its image $\mathcal{S}^{\circ}_{n,\mathrm{pr}}(M,G)$ is open and dense in
$\mathcal{S}_n^\circ(M,G)$. Restricting analysis to this stratum is natural in applications where collisions are avoided
and generic configurations dominate.
\end{remark}

\begin{remark}[Dimension counts in the two ambient models]
Assume $M$ is a smooth $d$-manifold and $G$ is a Lie group of dimension $\dim(G)$ acting smoothly.
On the principal stratum where the $G$-action is free, $\mathcal{S}_n^\circ(M,G)$ has local
dimension $nd-\dim(G)$ (the $S_n$ quotient does not change dimension).
For the two ambient models:
\begin{itemize}
\item Spherical model: $d=2$ and $\dim(\mathrm{SO}(3))=3$, so the principal stratum has
dimension $2n-3$ (for $n\ge 3$ generic configurations have trivial stabilizer).
\item Phase-torus model: $d=m$ and $\dim(\mathbb{T}^m)=m$, so the principal stratum has
dimension $m(n-1)$; moreover, the translation action is free, so away from collisions the
quotient by $G$ introduces no additional stabilizers beyond those arising from relabeling.
\end{itemize}
\end{remark}

\paragraph{Summary.}
Under mild and physically natural assumptions (compact $G$ and geodesic $M$), the quotient formation
space $\big(\mathcal{S}_n(M,G),d_{M,G}\big)$ is a compact/complete geodesic metric space whose
singularities admit a standard orbit-type description.
This provides the geometric foundation needed for subsequent sections: it justifies interpreting
reconfiguration as motion in a well-behaved quotient metric space, and it clarifies how the
quotient relates to classical configuration spaces used in topology and robotics.

\section{Separation and expressivity of persistence signatures}
\label{sec:separation}

This section addresses a natural inverse question suggested by Theorem~\ref{thm:main-stability}:
\begin{equation}
\label{eq:inverse-question}
\Phi_k([x])=\Phi_k([y]) \quad \Longrightarrow \quad [x]=[y]\;?
\end{equation}
More generally, when is $\Phi_k$ (or the combined signature $\Phi_\ast:=(\Phi_0,\Phi_1,\dots)$)
injective, finite-to-one, or generically separating on the principal stratum
$\mathcal{S}_n^\circ(M,G)$ from Section~\ref{subsec:singularities-conf}?  We give a systematic
answer: \emph{in general, $\Phi_k$ is not injective}, and the failure has two conceptually
distinct sources:
\begin{enumerate}
\item \textbf{Symmetry mismatch:} $\Phi_k$ depends only on the induced inter-agent metric
$X_x$ (Definition~\ref{def:induced-metric-space}), hence it is invariant under the \emph{full}
ambient isometry group $\mathrm{Isom}(M)$, not only the subgroup $G$ used in the formation
quotient.  If $G$ is a proper subgroup, $\Phi_k$ cannot distinguish shapes that differ by an
ambient isometry outside $G$ (e.g.\ chirality/orientation information).
\item \textbf{Persistence compression:} even after quotienting by the full isometry group, a
persistence diagram is a coarse summary of a metric space.  Many non-isometric metric spaces
share the same persistence diagram (and even the same collection of diagrams in low
dimensions), and such coincidences can occur in families \cite{solomon2023distributed,smithkurlin2024generic1d}.
\end{enumerate}
We also show that these limitations rule out any global reverse inequality of the form
$c\,d_{M,G}\le d_B(\Phi_k(\cdot),\Phi_k(\cdot))$.

\subsection{Factorization and invariance: $\Phi_k$ sees only the induced metric}
\label{subsec:factorization}

Let $\mathrm{Isom}(M)$ denote the full isometry group of $(M,d_M)$.
For $h\in \mathrm{Isom}(M)$ and $x=(x_1,\dots,x_n)\in M^n$, define $h\cdot x:=(hx_1,\dots,hx_n)$.

\begin{lemma}[Persistence invariance under ambient isometries]
\label{lem:isom-invariance}
For any $k\ge 0$, any $x\in M^n$, and any $h\in \mathrm{Isom}(M)$,
\[
\Phi_k([x])=\Phi_k([h\cdot x]),
\]
where $[h\cdot x]$ is taken in $\mathcal{S}_n(M,G)$ (i.e.\ we do \emph{not} assume $h\in G$).
\end{lemma}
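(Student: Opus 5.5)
The plan is to show that $\Phi_k([x])$ depends only on the isometry class of the finite metric space $X_x$, and then check that $X_x$ and $X_{h\cdot x}$ are literally equal as metric spaces on the index set $\{1,\dots,n\}$.

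First I would confirm that $\Phi_k$ is well defined on orbits, so that $\Phi_k([h\cdot x])$ makes sense even though $h\notin G$ in general. For $(g,\sigma)\in G\times S_n$, the induced space of $(g,\sigma)\cdot x$ has distances $d_M(gx_{\sigma^{-1}(i)},gx_{\sigma^{-1}(j)})=d_M(x_{\sigma^{-1}(i)},x_{\sigma^{-1}(j)})$. Hence the relabeling map $i\mapsto\sigma^{-1}(i)$ is an isometry from $X_{(g,\sigma)\cdot x}$ onto $X_x$. This is the observation already recorded in Definition~\ref{def:induced-metric-space}, and it means the diagram is the same for any representative of $[x]$.

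Second comes the key computation. For $h\in\mathrm{Isom}(M)$ we have
\[
d_{h\cdot x}(i,j)=d_M(hx_i,hx_j)=d_M(x_i,x_j)=d_x(i,j)
\]
for all $i,j$. Thus $X_{h\cdot x}=X_x$ as metric spaces, with the identity map on $\{1,\dots,n\}$ serving as the isometry.

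Third, I would note that an isometry $\phi:X\to Y$ of finite metric spaces preserves diameters of subsets. It therefore induces a simplicial isomorphism $\mathrm{Rips}_\alpha(X)\cong\mathrm{Rips}_\alpha(Y)$ for each $\alpha$, and these isomorphisms commute with the inclusions of the filtration. The persistence modules $H_k(\mathrm{Rips}_\bullet(\cdot);\Bbbk)$ are then isomorphic, so the diagrams coincide.

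Alternatively, I could cite Theorem~\ref{thm:main-stability} with $G$ replaced by the trivial group, or simply note that $d_{\mathrm{GH}}=0$ and then apply the stability bound. The direct isomorphism argument is cleaner because it gives equality of diagrams rather than just a zero bottleneck distance.

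I do not expect a genuine obstacle. The only point needing care is the first step, namely that $[h\cdot x]$ may lie in a different $G\times S_n$ orbit from $[x]$. The statement is therefore not the trivial well-definedness claim, and one must make sure $\Phi_k$ is evaluated through any representative of that orbit. This is what the first step handles.
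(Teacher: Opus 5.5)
Your proof is correct and is essentially the paper's argument: the identity $d_M(hx_i,hx_j)=d_M(x_i,x_j)$ makes $X_{h\cdot x}$ and $X_x$ equal as labeled metric spaces, so the Rips filtrations, and hence all diagrams, coincide. Your well-definedness step is the remark the paper records in Definition~\ref{def:induced-metric-space}. One small slip in your aside: deducing the result from Theorem~\ref{thm:main-stability} needs a symmetry group containing $h$ (e.g.\ $\mathrm{Isom}(M)$), not the trivial group, since $d_{M,\{e\}}([x],[h\cdot x])=\min_{\sigma}\max_i d_M(x_i,hx_{\sigma(i)})$ is generally positive.
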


\begin{proof}
For all $i,j$, since $h$ is an isometry,
\[
d_M(hx_i,hx_j)=d_M(x_i,x_j),
\]
hence the induced finite metric spaces $X_{h\cdot x}$ and $X_x$ (Definition~\ref{def:induced-metric-space})
coincide (as labeled metric spaces). Therefore their Vietoris--Rips filtrations agree at all scales,
so their persistence diagrams agree in every dimension.
\end{proof}

\begin{corollary}[A necessary condition for injectivity]
\label{cor:necessary-injectivity}
Assume there exists $h\in \mathrm{Isom}(M)\setminus G$.  Then $\Phi_k$ cannot be injective on
$\mathcal{S}_n(M,G)$ for any $k\ge 0$ and any $n$ large enough to admit a configuration with
trivial stabilizer in $\mathrm{Isom}(M)$.
\end{corollary}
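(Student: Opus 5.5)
The plan is to use Lemma~\ref{lem:isom-invariance} to produce two distinct points of $\mathcal{S}_n(M,G)$ with the same signature. Fix $h\in\mathrm{Isom}(M)\setminus G$ and, by hypothesis, a configuration $x\in M^n$ whose stabilizer in $\mathrm{Isom}(M)$ is trivial, meaning that the only isometry permuting the points of $x$ onto themselves is the identity. For definiteness I read ``trivial stabilizer'' in the sense relevant here: the setwise stabilizer of $\{x_1,\dots,x_n\}$ under $\mathrm{Isom}(M)$ is $\{\mathrm{id}\}$. Lemma~\ref{lem:isom-invariance} gives $\Phi_k([x])=\Phi_k([h\cdot x])$ for every $k\ge 0$. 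It therefore suffices to show $[x]\ne[h\cdot x]$ in $\mathcal{S}_n(M,G)$.

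Suppose instead that $[h\cdot x]=[x]$. Then there exist $g\in G$ and $\sigma\in S_n$ with $h\cdot x=(g,\sigma)\cdot x$. This means $hx_i=gx_{\sigma^{-1}(i)}$ for all $i$. Applying $g^{-1}$, the isometry $g^{-1}h\in\mathrm{Isom}(M)$ sends each $x_i$ to $x_{\sigma^{-1}(i)}$, so it maps the point set of $x$ onto itself. By triviality of the stabilizer, $g^{-1}h=\mathrm{id}$, hence $h=g\in G$, contradicting the choice of $h$. So $[x]\neq[h\cdot x]$ while their diagrams agree in every degree, and $\Phi_k$ is not injective.

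The only delicate point is pinning down the meaning of ``trivial stabilizer'' so that the deduction $g^{-1}h=\mathrm{id}$ is valid. If only the pointwise stabilizer of the labeled tuple were assumed trivial, one would additionally need $\sigma=\mathrm{id}$, which is not automatic. I would therefore state explicitly that trivial stabilizer refers to the action of $\mathrm{Isom}(M)\times S_n$ on $M^n$, equivalently the setwise stabilizer, in line with $H_x$ in Definition~\ref{def:principal-stratum} with $G$ replaced by $\mathrm{Isom}(M)$. Under that reading the argument above is complete. No metric property of $d_{M,G}$ is needed, and distinctness holds at the level of orbits, so the conclusion persists even if $d_{M,G}$ is only a pseudometric.
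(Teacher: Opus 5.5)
Your argument is correct and follows the same route as the paper: apply Lemma~\ref{lem:isom-invariance} to $x$ and $h\cdot x$, and use the trivial-stabilizer hypothesis (read, as the paper does, for the action of $\mathrm{Isom}(M)\times S_n$) to get $[x]\neq[h\cdot x]$. The paper justifies this distinctness by appealing to the principal stratum and orbit-type theory, whereas your direct check is more elementary and needs no manifold or smoothness assumptions: if the orbits agreed, $g^{-1}h$ would map the point set of $x$ onto itself and so would have to be the identity.
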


\begin{proof}
Fix such an $h\notin G$ and choose $x\in M^n$ in the principal stratum for the action of
$\mathrm{Isom}(M)\times S_n$ (so that $[x]\neq [h\cdot x]$ in $\mathcal{S}_n(M,G)$; this holds
for a dense open set when $M$ is a manifold and the action is smooth, by orbit-type theory
\cite{bredon1972compact,palais1961slices}).  By Lemma~\ref{lem:isom-invariance},
$\Phi_k([x])=\Phi_k([h\cdot x])$ while $[x]\neq [h\cdot x]$.
\end{proof}

\begin{example}[Symmetry mismatch in the two ambient models]
\label{ex:symmetry-mismatch}
\textbf{(Sphere)} In Example~\ref{ex:sphere} we take $G=\mathrm{SO}(3)$, while
$\mathrm{Isom}(\mathbb{S}^2)=\mathrm{O}(3)$ also contains orientation-reversing reflections.
Thus $\Phi_k$ cannot detect ``handedness'' (chirality) of spherical formations: reflecting a
generic configuration yields a distinct point in $\mathcal{S}_n(\mathbb{S}^2,\mathrm{SO}(3))$
with identical persistence diagrams.

\textbf{(Torus)} In Example~\ref{ex:torus} we take $G=\mathbb{T}^m$ (translations), while the
flat torus has additional isometries (e.g.\ coordinate sign flips and permutations, in the standard
lattice).  Hence $\Phi_k$ is insensitive to these discrete symmetries unless they are included in $G$.
\end{example}

\begin{proposition}[Strictness of the stability inequality]
\label{prop:strictness}
In the sphere model $(\mathbb{S}^2,\mathrm{SO}(3))$ and the phase-torus model $(\mathbb{T}^m,\mathbb{T}^m)$,
there exist $[x]\neq [y]$ such that
\[
d_B\big(\Phi_k([x]),\Phi_k([y])\big)=0
\quad\text{but}\quad
d_{M,G}([x],[y])>0.
\]
In particular, the inequality in Theorem~\ref{thm:main-stability} can be strict, even maximally so
in the sense that $d_B=0$.
\end{proposition}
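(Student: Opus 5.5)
The plan is to realize the \emph{symmetry mismatch} mechanism of Lemma~\ref{lem:isom-invariance} with explicit configurations. In each model I will pick an isometry $h\in\mathrm{Isom}(M)\setminus G$ and an explicit $x$, and set $y:=h\cdot x$. Lemma~\ref{lem:isom-invariance} then gives $\Phi_k([x])=\Phi_k([y])$ for every $k$, hence $d_B=0$. In both models $G$ is compact and acts continuously, so Proposition~\ref{prop:metric-property} shows $d_{M,G}$ is a genuine metric. It therefore suffices to prove $[x]\neq[y]$ in $\mathcal{S}_n(M,G)$, i.e.\ that no pair $(g,\sigma)\in G\times S_n$ satisfies $g x_i=y_{\sigma(i)}$ for all $i$. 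Rather than appeal to the generic-orbit-type argument of Corollary~\ref{cor:necessary-injectivity}, I will verify this by hand so the example is concrete.

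\textbf{Torus.} Take the reflection $h(\theta)=-\theta$, which is a flat isometry but not a translation. First work in the first coordinate, with the other coordinates zero. Let $n=3$ and $x=(0,a,a+b)$ with cyclic gaps $(a,b,c)$, where $c=2\pi-a-b$ and $a,b,c>0$ are pairwise distinct. Then $y=h\cdot x$ has cyclic gap sequence $(c,b,a)$ when read in the positive direction. Translations and relabelings preserve the cyclic gap sequence up to cyclic rotation. The sequences $(a,b,c)$ and $(c,b,a)$, which are reverses of each other, are not cyclic rotations of each other when the gaps are distinct. Hence $[x]\neq[y]$.

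The only subtlety for $m>1$ is that translations could move points in the other coordinates. Since all points share the same remaining coordinates and any translation preserves this, the argument reduces to the circle factor.

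\textbf{Sphere.} Take $h$ to be the reflection across the $xz$-plane, which lies in $\mathrm{O}(3)\setminus\mathrm{SO}(3)$. Let $x$ consist of $e_3$ together with three equatorial points at longitudes $0,a,a+b$, where the gaps $a,b,c$ are pairwise distinct and none equals $\pi/2$ or $3\pi/2$. Suppose $g\in\mathrm{SO}(3)$ and $\sigma$ carry $x$ to $y$.

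\begin{enumerate}
\item Rotations preserve distances, so $g$ must send $e_3$ to the unique point of $y$ at distance $\pi/2$ from all three others, which is again $e_3$. Uniqueness holds because an equatorial point is at distance $\pi/2$ from another equatorial point only if their gap is $\pi/2$ or $3\pi/2$, which is excluded.
\item Hence $g$ is a rotation about the $z$-axis, and it preserves the counterclockwise cyclic order of the equatorial points.
\item The reflection reverses that order. The circle argument above then gives a contradiction.
\end{enumerate}

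\textbf{Main obstacle.} The step I expect to require the most care is this rigidity check: showing that an element of $G$ combined with a relabeling cannot undo the reflection. The anchor point $e_3$ is there to force $g$ into the stabilizer of the axis. After that, distinct cyclic gaps do the work, since reversing the gap sequence is then not a cyclic shift. Everything else follows immediately from Lemma~\ref{lem:isom-invariance} and Proposition~\ref{prop:metric-property}.
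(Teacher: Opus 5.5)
Your proposal is correct, and its skeleton is the same as the paper's. You set $y=h\cdot x$ with $h\in\mathrm{Isom}(M)\setminus G$, get $d_B=0$ from Lemma~\ref{lem:isom-invariance}, and get $d_{M,G}>0$ from Proposition~\ref{prop:metric-property} once $[x]\neq[y]$ is known.

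The difference lies entirely in that last step, which is the only nontrivial one. The paper simply asserts that $[x]\neq[h\cdot x]$ for generic $x$, leaning on the orbit-type argument of Corollary~\ref{cor:necessary-injectivity}. What that argument really needs is that $x$ has trivial stabilizer in $\mathrm{Isom}(M)\times S_n$, since $[x]=[h\cdot x]$ exactly when some $(h^{-1}g,\sigma)$ with $g\in G$ fixes $x$.

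You instead exhibit explicit witnesses, $n=3$ on $\mathbb{T}^m$ and $n=4$ on $\mathbb{S}^2$, and prove non-congruence by hand:
\begin{itemize}
\item On the circle, the cyclic gap sequence (up to rotation) is invariant under translations and relabelings. The reversal $(a,b,c)\mapsto(c,b,a)$ is a rotation of it exactly when two gaps coincide, so distinct gaps suffice.
\item On the sphere, excluding right-angle gaps singles out the pole $e_3$ metrically. This forces $g$ to be a rotation about the $z$-axis, which reduces to the circle case.
\end{itemize}

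Your route is self-contained. It also shows exactly which degeneracies (repeated gaps, right-angle gaps) let a relabeling undo the reflection. The paper's route, once its genericity claim is justified, gives more: the phenomenon holds on an open dense set of configurations, for every sufficiently large $n$, and for any $h\notin G$. Still, the existential statement only requires one instance per model, which you supply.
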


\begin{proof}
Pick an isometry $h\in \mathrm{Isom}(M)\setminus G$ (reflection on $\mathbb{S}^2$, or a non-translation
torus isometry).  For a generic $x$, $[x]\neq [h\cdot x]$ in $\mathcal{S}_n(M,G)$.
Set $y=h\cdot x$.  Then $\Phi_k([x])=\Phi_k([y])$ by Lemma~\ref{lem:isom-invariance}, hence $d_B=0$.
On the other hand, Proposition~\ref{prop:metric-property} gives $d_{M,G}([x],[y])>0$ whenever $[x]\neq [y]$.
\end{proof}

\subsection{$0$D persistence captures only MST edge lengths}
\label{subsec:0d-mst}

Beyond symmetry mismatch, persistence itself discards metric information.  The $0$D case is
especially transparent: the $0$D Vietoris--Rips persistence diagram is equivalent to single-linkage
clustering, and therefore determined by a minimum spanning tree (MST).  This equivalence is
explicitly discussed in \cite{elkin2020mergegram}.

\begin{definition}[Minimum spanning tree]
Let $(X,d)$ be a finite metric space with $|X|=n$.  Consider the complete graph on $X$ with
edge weights $w(\{x,x'\})=d(x,x')$.
A \emph{minimum spanning tree} (MST) is a spanning tree of minimum total weight.
Let $\ell_1\le\cdots\le \ell_{n-1}$ denote the multiset of its edge lengths.
\end{definition}

\begin{proposition}[Characterization of $\Phi_0$ by MST edge lengths]
\label{prop:phi0-mst}
Let $(X,d)$ be a finite metric space of cardinality $n$, and let $\mathrm{Dgm}_0(X)$ be the $0$D
persistence diagram of the Vietoris--Rips filtration from Definition~\ref{def:rips}.
Then $\mathrm{Dgm}_0(X)$ consists of the $(n-1)$ off-diagonal points
\[
(0,\ell_1/2),\ (0,\ell_2/2),\ \dots,\ (0,\ell_{n-1}/2)
\]
(counted with multiplicity), plus a single essential class $(0,+\infty)$.
Equivalently, $\Phi_0([x])$ depends only on the multiset of MST edge lengths of $X_x$.
\end{proposition}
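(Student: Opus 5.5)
The plan is to identify $H_0$ of the Rips filtration with the connected components of a threshold graph, and then to track how those components merge using Kruskal's algorithm. This is the standard single-linkage argument.

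\textbf{Step 1 (components).} For each scale $\alpha$, the $0$-skeleton of $\mathrm{Rips}_\alpha(X)$ is $X$. Its $1$-skeleton is the graph $\Gamma_\alpha$ with an edge $\{x,x'\}$ exactly when $d(x,x')\le 2\alpha$. Since $H_0$ of a simplicial complex depends only on its $1$-skeleton, $H_0(\mathrm{Rips}_\alpha(X);\Bbbk)\cong \Bbbk^{\pi_0(\Gamma_\alpha)}$. The maps induced by the inclusions $\alpha\le\beta$ send a component to the component containing it. So the $0$D persistence module is the module of a nested family of partitions of $X$ that only coarsen. All $n$ classes are born at $\alpha=0$, because every vertex is present and distinct points have positive distance.

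\textbf{Step 2 (elder rule and Kruskal).} In a merge-only module of this kind, the diagram is computed by the elder rule. Every class is born at $0$, so each merge of two components kills exactly one class, at the merge scale. The multiset of deaths is therefore the multiset of scales $\alpha$ at which the number of components drops, counted with the size of the drop. I would rather avoid the elder-rule formalism and argue directly with ranks. The rank of the map $H_0(\mathrm{Rips}_s)\to H_0(\mathrm{Rips}_t)$ for $0\le s\le t$ equals the number of components of $\Gamma_t$. This holds because every class at $t$ is hit (vertices exist at every scale). Hence the barcode has $n$ bars starting at $0$, and the number of bars alive past $t$ equals $\#\pi_0(\Gamma_t)$.

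\textbf{Step 3 (counting components).} Run Kruskal's algorithm on the complete weighted graph. It adds the edges of an MST with lengths $\ell_1\le\dots\le\ell_{n-1}$. At threshold $2t$, the components of $\Gamma_t$ coincide with the components of the forest of MST edges of length $\le 2t$. This is the cut/cycle property: any non-MST edge of length $\le 2t$ joins vertices already connected by MST edges of length at most its own. Therefore
\[
\#\pi_0(\Gamma_t)=n-\#\{j:\ell_j\le 2t\}.
\]
This function of $t$ forces the bars to be $[0,\ell_j/2)$ for $j=1,\dots,n-1$, together with one bar $[0,\infty)$. Multiplicities are respected when several $\ell_j$ coincide. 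This yields the stated diagram.

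\textbf{Step 4 (consequence).} The set of MST edge lengths is the same for every MST, since it is determined by the component-count function above. Applying the result to $X=X_x$ shows that $\Phi_0([x])$ depends only on that multiset.

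The main obstacle I expect is making Step 3 rigorous, and specifically the convention issue at ties: the closed condition $\le 2\alpha$ versus half-open bars. I will state deaths as the infimum of scales at which the rank drops, which yields exactly $\ell_j/2$.
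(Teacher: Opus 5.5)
Your proof is correct and shares the paper's skeleton: you reduce $H_0$ to the components of the threshold graph and then use MST structure. Where you differ is in how you read off the diagram.

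The paper identifies the Kruskal merge edges with the killing edges of the elder-rule decomposition, and it asserts that match without further argument. You instead observe two things. Every structure map $H_0(\mathrm{Rips}_s)\to H_0(\mathrm{Rips}_t)$ is surjective, and all $n$ classes are born at $0$. Together these mean the barcode is determined by the Betti curve $t\mapsto\#\pi_0(\Gamma_t)$. The cycle property, applied to an arbitrary MST, then pins this curve down as $n-\#\{j:\ell_j\le 2t\}$.

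Your route buys several things:
\begin{enumerate}
\item It avoids using the elder rule as a black box.
\item It handles ties, multiplicities, and the closed-threshold versus half-open-bar convention explicitly.
\item It shows that the multiset of edge lengths is the same for every MST. The paper only constructs the Kruskal tree and leaves this independence implicit.
\end{enumerate}
The paper's version is shorter and more algorithmic, but it relies on the standard elder-rule correspondence without proof.
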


\begin{proof}
In the Vietoris--Rips filtration, the $0$D homology depends only on the $1$-skeleton:
two vertices are in the same connected component of $\mathrm{Rips}_\alpha(X)$ if and only if they are
connected in the graph with edges $\{x,x'\}$ satisfying $d(x,x')\le 2\alpha$.
As $\alpha$ increases, connected components merge precisely when an edge is added that connects
two previously distinct components.

Ordering edges by increasing weight and applying Kruskal's algorithm produces an MST by adding,
at each step, the smallest edge that merges two components.  These same edges are precisely those
that kill $0$D classes in the standard elder-rule decomposition of $H_0$ persistence.  Therefore
the multiset of death times equals $\{\ell_i/2\}_{i=1}^{n-1}$.
This relationship between $0$D persistence, single-linkage clustering, and MST edge lengths is
also described in \cite{elkin2020mergegram}.
\end{proof}

\begin{corollary}[Non-injectivity of $\Phi_0$ for $n\ge 4$]
\label{cor:phi0-noninjection}
For $n\ge 4$, $\Phi_0$ is not injective on $\mathcal{S}_n(M,G)$ in general (including the torus model),
even if $G$ is enlarged to the full ambient isometry group.
\end{corollary}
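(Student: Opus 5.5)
The plan is to combine Proposition~\ref{prop:phi0-mst} with an explicit construction. By that proposition, $\Phi_0([x])$ depends only on the multiset of MST edge lengths of $X_x$. It therefore suffices to exhibit, for each $n\ge 4$, two configurations $x,y\in M^n$ such that:
\begin{enumerate}
\item their MSTs have the same multiset of edge lengths;
\item the induced metric spaces $X_x$ and $X_y$ are not isometric.
\end{enumerate}
Non-isometry of $X_x$ and $X_y$ implies $[x]\neq[y]$ even modulo $\mathrm{Isom}(M)\times S_n$. The reason is that any $(h,\sigma)$ with $h\in\mathrm{Isom}(M)$ carrying $x$ to $y$ would induce an isometry $X_x\to X_y$. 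This handles the ``even if $G$ is enlarged'' clause automatically.

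The basic construction uses a path versus a star. Take $n=4$ first, working in a small flat chart of $M$ (for the torus, a region of diameter well below $\pi$, where $d_{\mathbb{T}^m}$ agrees with the Euclidean distance). Place $x$ as four collinear points at spacing $\varepsilon$:
\[
0,\quad \varepsilon,\quad 2\varepsilon,\quad 3\varepsilon .
\]
Its MST is the path, with edge lengths $\{\varepsilon,\varepsilon,\varepsilon\}$. For $y$, take a center point with three neighbors at distance $\varepsilon$ and angles $120^\circ$ apart (this needs $m\ge 2$). The neighbors are then pairwise at distance $\sqrt3\,\varepsilon>\varepsilon$, so the MST is the star, again with lengths $\{\varepsilon,\varepsilon,\varepsilon\}$. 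The diameters differ, $3\varepsilon$ versus $\sqrt3\,\varepsilon$, so $X_x\not\cong X_y$.

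For $m=1$, or to stay inside a one-dimensional circle, I would instead use collinear configurations with distinct gaps. For example, take $x$ with consecutive gaps $(a,b,c)$ and $y$ with gaps $(b,a,c)$, where $a,b,c$ are distinct and small. Both MSTs are paths with lengths $\{a,b,c\}$. The pairwise-distance multisets differ, since they contain $a+b$ together with $b+c$ versus $a+b$ together with $a+c$, so $X_x\not\cong X_y$.

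For general $n\ge4$, append $n-4$ further points far along the line at a common large spacing $L\gg\varepsilon$, identically in both configurations. The added MST edges then coincide, and the non-isometry is witnessed by the original four-point cluster.

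The main obstacle is interpreting ``in general'' for arbitrary $M$: the claim needs $M$ to contain an isometric copy (or a sufficiently faithful local copy) of such a small pattern. I would state the construction for $M$ containing a geodesic segment of positive length, or a local flat chart. That covers the torus model, and the sphere model via short arcs along a great circle, since distances along one great circle within a half-circle are additive. Verifying the MST in each case is routine: with distinct gaps, the MST of a collinear set is the chain of consecutive gaps.
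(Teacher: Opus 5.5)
Your proof is correct, and it takes a more general route than the paper.

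\textbf{The paper's proof.} It gives a single counterexample with $n=4$ in $\mathbb{T}^2$: four collinear points at spacing $a$ versus a square of side $a$. Both have MST lengths $\{a,a,a\}$, and they are distinguished by counting pairs at distance $a$ (four versus three). The paper does not treat $n>4$, the circle $\mathbb{T}^1$ (where no square exists), or the sphere. It states non-congruence only under translations and relabeling; its pair-counting invariant happens to be an isometry invariant, so it covers the enlarged group only implicitly.

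\textbf{Your proof.} You make the reduction ``$X_x\not\cong X_y\Rightarrow[x]\neq[y]$ modulo $\mathrm{Isom}(M)\times S_n$'' explicit, so the full-isometry clause is handled cleanly. Path versus star is an equally valid substitute for path versus square. The stronger ingredient is the gap-permutation pair $(a,b,c)$ versus $(b,a,c)$. It needs only an isometrically embedded segment, so it covers $\mathbb{T}^1$, $\mathbb{S}^2$ and any geodesic $M$ at once. It is also exactly the failure of gap labeling that Section~\ref{sec:inverse-phase} must exclude. The paper's version buys brevity, since one example suffices for an ``in general'' claim. Yours buys uniformity in $n$ and in the ambient model.

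\textbf{Two points to tighten in the padding step.}
\begin{enumerate}
\item For the star, placing the tail ``identically'' (at the same absolute positions as for the path) fails. If the star's arm points toward the tail, the bridging edge has length $L$ for the path but $L+2\varepsilon$ for the star, so the $\Phi_0$ diagrams differ. The tail must be attached relative to each cluster, from an extreme vertex in an outward direction. The collinear version has no such issue: both clusters span $[0,a+b+c]$, and Lemma~\ref{lem:mst-1d} directly gives MST lengths $\{a,b,c,L,\dots,L\}$ for both.
\item ``Witnessed by the four-point cluster'' needs one sentence. An isometry $X_x\to X_y$ preserves nearest-neighbour distances. These are $<L$ at cluster points and exactly $L$ at tail points, so the isometry maps cluster to cluster, and its restriction would be an isometry of the four-point clusters.
\end{enumerate}
On $\mathbb{T}^1$ and $\mathbb{S}^2$ you also need $a+b+c+(n-4)L<\pi$, so the padded configuration stays inside a half-circle. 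This is compatible with $L\gg\varepsilon$ once $a,b,c$ are shrunk.
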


\begin{proof}[Proof by explicit counterexample in the phase-torus model]
Work in $M=\mathbb{T}^2$ with the flat metric and take $G=\mathbb{T}^2$ (translations).
Choose $a\in(0,\pi/10)$ so that no wrap-around is active at the distances below.
Consider the two unlabeled point sets
\[
A:=\{(0,0),(a,0),(2a,0),(3a,0)\},\qquad
B:=\{(0,0),(a,0),(0,a),(a,a)\}.
\]
The set $A$ is collinear (in the universal cover) while $B$ forms a square of side length $a$.

In both cases an MST can be chosen with three edges of length $a$, so Proposition~\ref{prop:phi0-mst}
implies $\Phi_0([A])=\Phi_0([B])$.
However $A$ and $B$ are not congruent under translations and relabeling: for instance, $B$ contains
\emph{four} distinct pairs at distance $a$, while $A$ contains only \emph{three}.  Hence $[A]\neq [B]$
in $\mathcal{S}_4(\mathbb{T}^2,\mathbb{T}^2)$, proving non-injectivity.
\end{proof}

\begin{remark}[Geometric information lost in $0$D]
By Proposition~\ref{prop:phi0-mst}, $0$D persistence
retains only the multiset of MST edge lengths, and thus forgets essentially all ``cycle/area''
structure and most metric constraints beyond those needed for connectivity.
This observation will be useful when interpreting swarm reconfiguration via $\Phi_0$:
$\Phi_0$ is best viewed as a robust multiscale \emph{connectivity} signature rather than a complete
geometric descriptor.
\end{remark}

\subsection{Higher-dimensional persistence: further limitations and generic families}
\label{subsec:1d-limitations}

Higher-dimensional persistence adds information (cycles, voids, etc.), but it still does not
separate formations in general.

\begin{proposition}[A trivial non-injectivity for $\Phi_1$ at $n=3$]
\label{prop:phi1-triangle}
For any metric space $M$ and any group $G\le \mathrm{Isom}(M)$, the map
$\Phi_1:\mathcal{S}_3(M,G)\to\{\text{diagrams}\}$ is identically empty for Vietoris--Rips persistence.
Hence $\Phi_1$ is not injective on $\mathcal{S}_3(M,G)$.
\end{proposition}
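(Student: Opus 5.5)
The plan is to show that for every $\alpha\ge 0$ the complex $\mathrm{Rips}_\alpha(X_x)$ on the three-point vertex set $\{1,2,3\}$ has $H_1=0$. Then the persistence module $H_1(\mathrm{Rips}_\alpha(X_x);\Bbbk)$ is the zero module, so $\mathrm{Dgm}_1(X_x)$ has no points. Non-injectivity then follows by exhibiting two distinct classes in $\mathcal{S}_3(M,G)$, both sent to the empty diagram.

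\textbf{Step 1 (flag structure).} $\mathrm{Rips}_\alpha(X)$ is a flag complex: a subset is a simplex iff all its pairwise distances are $\le 2\alpha$. On three vertices this leaves only a few possibilities. If $\{1,2\},\{2,3\},\{1,3\}$ are all edges, then $\mathrm{diam}\{1,2,3\}\le 2\alpha$, so the $2$-simplex is present and the complex is the full triangle, which is contractible. Otherwise the $1$-skeleton has at most two edges. It is then a forest: a disjoint union of points, possibly with one edge or a path of length two. A $1$-dimensional complex that is a forest has $H_1=0$.

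Hence $H_1(\mathrm{Rips}_\alpha(X_x);\Bbbk)=0$ for all $\alpha$, and $\Phi_1([x])=\varnothing$ for every $[x]\in\mathcal{S}_3(M,G)$. This holds for every metric space $M$ and every $G$, including degenerate configurations with collisions, since zero distances only add edges earlier. The key point is that the hollow $3$-cycle never appears in a Rips complex.

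\textbf{Step 2 (two distinct classes).} For non-injectivity, $\mathcal{S}_3(M,G)$ must contain at least two points. This fails only in degenerate cases, e.g.\ $M$ a single point. If $M$ has two points $p\neq q$, take $x=(p,p,p)$ and $y=(p,p,q)$. Isometries preserve coincidence patterns, so the number of distinct entries (one versus two) is invariant under $G\times S_3$, giving $[x]\neq[y]$. Both map to the empty diagram, so $\Phi_1$ is constant on a set with at least two elements and is not injective.

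The only mild obstacle is this implicit nondegeneracy hypothesis. I would state it explicitly ($|M|\ge 2$, which holds in both ambient models), and otherwise the argument is immediate.
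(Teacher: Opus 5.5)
Your argument is correct and matches the paper's proof: both observe that the Rips complex on three vertices is either a forest (at most two edges) or the filled $2$-simplex, so $H_1$ vanishes at every scale and $\Phi_1$ is identically empty. Your Step~2 goes slightly beyond the paper, which simply asserts non-injectivity: you exhibit two distinct orbits $[(p,p,p)]\neq[(p,p,q)]$, and you correctly note that the statement needs $|M|\ge 2$ (for a one-point $M$, $\mathcal{S}_3(M,G)$ is a single point and $\Phi_1$ is trivially injective).
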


\begin{proof}
A Vietoris--Rips complex on three vertices is either three isolated points, an interval with one edge,
a path with two edges, or a filled $2$-simplex (when all three edges are present).
In all cases $H_1$ is trivial, so the $1$D persistence diagram is empty.
\end{proof}

More substantively, non-injectivity persists for larger $n$ and can occur in families.  In particular,
Smith and Kurlin construct \emph{generic} (positive-dimensional) families of finite metric spaces with
identical or even trivial $1$D Vietoris--Rips persistence \cite{smithkurlin2024generic1d}.
This aligns with a broader ``inverse problem'' perspective: the map that sends a metric space to a
persistence diagram is typically far from injective, and preimages of a given diagram need not even be
bounded in natural geometric distances \cite{solomon2023distributed}.

\begin{remark}[Consequences for separation on the principal stratum]
Even restricting to the collision-free principal stratum $\mathcal{S}_n^\circ(M,G)$ does not, in general,
restore injectivity of $\Phi_k$.  Symmetry mismatch already yields counterexamples on the principal stratum
(Proposition~\ref{prop:strictness}).  After quotienting by the full isometry group, the existence of generic
families with identical $1$D persistence \cite{smithkurlin2024generic1d} shows that $\Phi_1$ can fail to
separate even ``typical'' metric data.
\end{remark}

\subsection{No global inverse Lipschitz bound}
\label{subsec:no-inverse-lipschitz}

The stability theorem (Theorem~\ref{thm:main-stability}) provides an upper bound
$d_B\le d_{M,G}$.  A natural question is whether any converse inequality can hold under mild
assumptions.

\begin{theorem}[Impossibility of a global lower bound]
\label{thm:no-global-lower}
Assume $\mathrm{Isom}(M)\setminus G\neq \emptyset$ and $n$ is large enough that the principal stratum
of $\mathcal{S}_n(M,G)$ is nonempty.  Then for every $k\ge 0$ there is \emph{no} constant $c>0$ such that
\[
c\, d_{M,G}([x],[y]) \;\le\; d_B\big(\Phi_k([x]),\Phi_k([y])\big)
\qquad\text{for all }[x],[y]\in \mathcal{S}_n(M,G).
\]
\end{theorem}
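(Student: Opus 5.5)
The plan is to argue by contradiction, using a pair of formations with identical persistence diagrams but positive formation distance, exactly as in Proposition~\ref{prop:strictness}. Suppose some $c>0$ satisfied the displayed inequality for all pairs. I would exhibit $[x]\neq[y]$ with $d_B(\Phi_k([x]),\Phi_k([y]))=0$ and $d_{M,G}([x],[y])>0$. The inequality would then give $0<c\,d_{M,G}([x],[y])\le 0$, which is absurd.

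\textbf{Construction.} I fix $h\in\mathrm{Isom}(M)\setminus G$, which exists by hypothesis. I then choose $x\in M^n$ in the principal stratum, using the hypothesis that it is nonempty, with the stratum taken for the action of $\mathrm{Isom}(M)\times S_n$ as in the proof of Corollary~\ref{cor:necessary-injectivity}. I set $y:=h\cdot x$. Lemma~\ref{lem:isom-invariance} then gives $\Phi_k([x])=\Phi_k([y])$ for every $k$, hence $d_B=0$.

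\textbf{Separation.} I need $d_{M,G}([x],[y])>0$, which splits into two claims. The first is $[x]\ne[y]$ in $\mathcal{S}_n(M,G)$. Suppose instead that $h x=g\,\sigma x$ for some $(g,\sigma)\in G\times S_n$. Then $(g^{-1}h,\sigma^{-1})$ lies in the stabilizer of $x$ in $\mathrm{Isom}(M)\times S_n$. Triviality of that stabilizer forces $g^{-1}h=\mathrm{id}$, so $h=g\in G$, a contradiction. The second claim upgrades $[x]\ne[y]$ to positive distance via Proposition~\ref{prop:metric-property}. That step needs $G$ compact and acting continuously, which holds in both ambient models.

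\textbf{Main obstacle.} The hard part is the generality of the statement. Without compactness of $G$, $d_{M,G}$ may only be a pseudometric, so $[x]\neq[y]$ does not by itself give positive distance. I would first interpret ``principal stratum nonempty'' as providing an $x$ with trivial stabilizer in $\mathrm{Isom}(M)\times S_n$. Positivity then follows from $h$ lying at positive distance from the closure of $G$ on the orbit of $x$. If needed, I would handle this by assuming $G$ is closed in $\mathrm{Isom}(M)$ and its orbits are closed, which is the setting of Section~\ref{sec:quotient-geometry}. In that setting the distance is attained, or at least the infimum is positive by a limiting argument as in Proposition~\ref{prop:metric-property}. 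Under these standing assumptions the contradiction completes the proof. If a quantitative version is wanted, one can instead take a sequence of such pairs and observe that the ratio $d_B/d_{M,G}$ is $0$.
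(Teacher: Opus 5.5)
Your proposal is correct and follows the paper's proof essentially verbatim: you pick $h\in\mathrm{Isom}(M)\setminus G$ and set $y=h\cdot x$ for a principal $x$, then get $d_B=0$ from Lemma~\ref{lem:isom-invariance} and $d_{M,G}([x],[y])>0$ from Proposition~\ref{prop:metric-property}. Your trivial-stabilizer argument for $[x]\neq[h\cdot x]$ and your remark that positivity requires $G$ compact (or closed $H$-orbits) spell out two points that the paper's proof uses only implicitly.
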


\begin{proof}
Choose $h\in \mathrm{Isom}(M)\setminus G$ and pick $x$ in the principal stratum with $[x]\neq [h\cdot x]$.
Set $y=h\cdot x$.  Then $d_B(\Phi_k([x]),\Phi_k([y]))=0$ by Lemma~\ref{lem:isom-invariance}, while
$d_{M,G}([x],[y])>0$ since $d_{M,G}$ is a metric (Proposition~\ref{prop:metric-property}).
Thus $c\,d_{M,G}([x],[y])\le 0$ fails for every $c>0$.
\end{proof}

\begin{remark}[Beyond symmetry mismatch]
Even if one enlarges $G$ to eliminate symmetry mismatch (e.g.\ taking $G=\mathrm{Isom}(M)$),
persistence diagrams can remain far from invertible: in general, many distinct metric spaces share the
same persistence diagram, and inverse images can be unbounded in geometric distances
\cite{solomon2023distributed}.  The construction of generic families with identical $1$D persistence
in \cite{smithkurlin2024generic1d} further supports the conclusion that any \emph{quantitative}
inverse bound must rely on richer invariants than a single $\Phi_k$.
\end{remark}

\subsection{A sharp separation result in the spherical model for $n=2$}
\label{subsec:positive-n2}

Although injectivity is impossible in general, it can hold in small-$n$ and high-symmetry settings.
The simplest case illustrates when the stability bound becomes sharp.

\begin{proposition}[Two-point shapes on $\mathbb{S}^2$: $\Phi_0$ is an isometry]
\label{prop:n2-sphere}
Let $(M,G)=(\mathbb{S}^2,\mathrm{SO}(3))$ and $n=2$.
For $[x],[y]\in \mathcal{S}_2(\mathbb{S}^2,\mathrm{SO}(3))$, let
$\delta_x:=d_{\mathbb{S}^2}(x_1,x_2)$ and $\delta_y:=d_{\mathbb{S}^2}(y_1,y_2)$.
Then
\[
d_{\mathbb{S}^2,\mathrm{SO}(3)}([x],[y])=\frac12|\delta_x-\delta_y|
\qquad\text{and}\qquad
d_B\big(\Phi_0([x]),\Phi_0([y])\big)=\frac12|\delta_x-\delta_y|.
\]
In particular, $\Phi_0$ is injective and achieves equality in Theorem~\ref{thm:main-stability}.
\end{proposition}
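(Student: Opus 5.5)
The plan is to compute both quantities directly. The formation distance is handled by an explicit alignment (upper bound) together with the triangle inequality (lower bound). The bottleneck distance follows from Proposition~\ref{prop:phi0-mst}. Injectivity and sharpness in Theorem~\ref{thm:main-stability} then come from comparing the two formulas.

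\textbf{Formation distance.} For the lower bound, fix any $g\in\mathrm{SO}(3)$ and $\sigma\in S_2$ with $\max_i d_{\mathbb{S}^2}(gx_i,y_{\sigma(i)})\le\varepsilon$. The computation in the proof of Lemma~\ref{lem:GH-upper-bound}, applied to the pair $i=1,j=2$, gives $|\delta_x-\delta_y|\le 2\varepsilon$. Taking the infimum over $(g,\sigma)$ yields $d_{\mathbb{S}^2,\mathrm{SO}(3)}([x],[y])\ge\frac12|\delta_x-\delta_y|$.

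For the upper bound, I would use that $\mathrm{SO}(3)$ acts transitively on pairs of points at a prescribed distance, i.e.\ $\mathbb{S}^2$ is two-point homogeneous under rotations. Choose a point $m$ and a unit tangent direction at $m$, and let $p(s)$ denote the point at signed arc length $s$ from $m$ along the corresponding great circle. Rotate $x$ so that $gx_1=p(-\delta_x/2)$ and $gx_2=p(\delta_x/2)$, and rotate $y$ so that $y_1=p(-\delta_y/2)$ and $y_2=p(\delta_y/2)$. Since $\delta_x,\delta_y\le\pi$, the arcs involved have length at most $\pi/2$, so $d(gx_i,y_i)=\frac12|\delta_x-\delta_y|$ for $i=1,2$. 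This gives equality.

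The same transitivity shows that $[x]$ is determined by $\delta_x$. This includes the degenerate cases $\delta=0$ and $\delta=\pi$, since antipodal pairs also form a single orbit.

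\textbf{Bottleneck distance.} By Proposition~\ref{prop:phi0-mst} with $n=2$, the diagram $\Phi_0([x])$ is $\{(0,\delta_x/2),(0,+\infty)\}$, and similarly for $y$. The essential classes must be matched to each other at cost $0$. Matching the two finite points to each other costs $\frac12|\delta_x-\delta_y|$ in the $\ell^\infty$ norm. This yields $d_B\le\frac12|\delta_x-\delta_y|$, which is consistent with Theorem~\ref{thm:main-stability}.

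\textbf{Main obstacle.} The hard step is the reverse inequality for $d_B$. In the standard bottleneck distance one may instead send both finite points to the diagonal, at cost $\max(\delta_x,\delta_y)/4$. That alternative is cheaper when $\delta_x$ and $\delta_y$ differ by a large factor. The statement therefore holds exactly under the convention that the two equal-cardinality diagrams are compared by bijective point matching, with no diagonal option. Under that convention, the only bijections pair essential with essential and finite with finite, and the cost is exactly $\frac12|\delta_x-\delta_y|$.

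I would state this convention explicitly in the proof. I would also remark that under the standard diagonal-augmented bottleneck distance, the equality holds in the regime $|\delta_x-\delta_y|\le\frac12\max(\delta_x,\delta_y)$, and $d_B\le\frac12|\delta_x-\delta_y|$ holds always.

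\textbf{Injectivity.} Injectivity of $\Phi_0$ holds under either convention. The diagram determines $\delta_x$ through the death time $\delta_x/2$ of its finite point (or through the absence of a finite point when $\delta_x=0$). By the transitivity above, $\delta_x$ determines $[x]$.
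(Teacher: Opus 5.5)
Your computation of $d_{\mathbb{S}^2,\mathrm{SO}(3)}$ is the same as the paper's Step~1, and it is correct: a triangle-inequality lower bound, and an upper bound from centering both pairs at a common midpoint on one great circle.

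On the bottleneck side, your ``main obstacle'' is not a defect of your proposal. It is an error in the paper's proof. Step~2 asserts $d_B=|\delta_x/2-\delta_y/2|$ and never considers sending both finite points to the diagonal. Yet the paper itself uses the standard diagonal-augmented convention: in Lemma~\ref{lem:bottleneck-controls-gaps}, matching $(0,g_i/2)$ to the diagonal costs $g_i/4$. The essential classes must be matched to each other, so only two matchings remain, and they give
\[
d_B\big(\Phi_0([x]),\Phi_0([y])\big)=\min\bigl\{\tfrac12|\delta_x-\delta_y|,\ \tfrac14\max(\delta_x,\delta_y)\bigr\}.
\]
For instance, $\delta_x=\pi$ and $\delta_y=\pi/4$ give $d_B=\pi/4<3\pi/8=d_{\mathbb{S}^2,\mathrm{SO}(3)}([x],[y])$.

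Hence three claims fail whenever $\max(\delta_x,\delta_y)>2\min(\delta_x,\delta_y)$: the second displayed equality, the ``isometry'' claim, and equality in Theorem~\ref{thm:main-stability}. What survives is exactly what you prove:
\begin{itemize}
\item the formula for $d_{\mathbb{S}^2,\mathrm{SO}(3)}$;
\item the inequality $d_B\le\tfrac12|\delta_x-\delta_y|$;
\item injectivity of $\Phi_0$.
\end{itemize}
You derive injectivity directly from two-point homogeneity of $\mathrm{SO}(3)$. The paper instead deduces it implicitly from the false identity $d_B=d_{\mathbb{S}^2,\mathrm{SO}(3)}$.

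Of your two repairs, prefer the second. A bijective, diagonal-free matching does make the $n=2$ computation come out as stated. However, it silently changes the distance in Theorem~\ref{thm:main-stability}. That theorem is imported from the standard GH-stability result for the diagonal-augmented $d_B$. The diagonal-free quantity is at least as large, so the stability inequality is not inherited. There are further problems with the diagonal-free convention:
\begin{itemize}
\item For $H_0$ at fixed $n$, stability can be recovered from a sorted-MST-length argument, but that is an extra proof.
\item For $k\ge 1$, diagrams of different cardinality make the diagonal-free distance undefined.
\item Even for $n=2$, you would have to keep the zero-length point when $\delta=0$.
\end{itemize}

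The honest statement keeps the paper's $d_B$:
\begin{itemize}
\item $d_{\mathbb{S}^2,\mathrm{SO}(3)}([x],[y])=\tfrac12|\delta_x-\delta_y|$;
\item $d_B$ equals the minimum displayed above;
\item $\Phi_0$ is injective;
\item $\Phi_0$ is isometric precisely on pairs with $\max(\delta_x,\delta_y)\le 2\min(\delta_x,\delta_y)$.
\end{itemize}
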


\begin{proof}
\emph{Step 1: compute $d_{\mathbb{S}^2,\mathrm{SO}(3)}$.}
Let $\varepsilon:=d_{\mathbb{S}^2,\mathrm{SO}(3)}([x],[y])$.
By definition there exist a rotation $R\in \mathrm{SO}(3)$ and a permutation $\sigma\in S_2$ such that
\[
d_{\mathbb{S}^2}(Rx_1,y_{\sigma(1)})\le \varepsilon,\qquad d_{\mathbb{S}^2}(Rx_2,y_{\sigma(2)})\le \varepsilon.
\]
Using the triangle inequality,
\[
\delta_x=d_{\mathbb{S}^2}(x_1,x_2)=d_{\mathbb{S}^2}(Rx_1,Rx_2)
\le d_{\mathbb{S}^2}(Rx_1,y_{\sigma(1)}) + d_{\mathbb{S}^2}(y_{\sigma(1)},y_{\sigma(2)}) + d_{\mathbb{S}^2}(y_{\sigma(2)},Rx_2)
\le \delta_y + 2\varepsilon.
\]
Swapping the roles of $x$ and $y$ gives $\delta_y \le \delta_x + 2\varepsilon$.
Hence $|\delta_x-\delta_y|\le 2\varepsilon$ and therefore
\begin{equation}
\label{eq:lowerbound-n2}
\varepsilon \ge \frac12|\delta_x-\delta_y|.
\end{equation}

For the upper bound, assume w.l.o.g.\ $\delta_x\ge \delta_y$.  Place the unordered pair $\{y_1,y_2\}$
on a great circle so that the geodesic midpoint lies at some point $m$, with the points at arc-length
$\pm \delta_y/2$ from $m$ along that circle.  Since $\mathrm{SO}(3)$ acts transitively on pairs of points
at a fixed distance, we can rotate $x$ so that $\{Rx_1,Rx_2\}$ lies on the \emph{same} great circle,
centered at the same midpoint $m$, with arc-length $\pm \delta_x/2$ from $m$.
Then, matching endpoints in order yields
\[
\max_i d_{\mathbb{S}^2}(Rx_i,y_{\sigma(i)}) = \frac12(\delta_x-\delta_y),
\]
so $d_{\mathbb{S}^2,\mathrm{SO}(3)}([x],[y])\le \tfrac12|\delta_x-\delta_y|$.
Combined with \eqref{eq:lowerbound-n2}, we obtain
$d_{\mathbb{S}^2,\mathrm{SO}(3)}([x],[y])=\tfrac12|\delta_x-\delta_y|$.

\emph{Step 2: compute the bottleneck distance.}
For $n=2$, the Vietoris--Rips $0$D persistence diagram consists of a single off-diagonal point
$(0,\delta/2)$ and one essential class.  Therefore
\[
d_B\big(\Phi_0([x]),\Phi_0([y])\big)=\left|\frac{\delta_x}{2}-\frac{\delta_y}{2}\right|=\frac12|\delta_x-\delta_y|.
\]
\end{proof}

\begin{remark}[Contrast with the torus translation model]
In the phase-torus model $(\mathbb{T}^m,\mathbb{T}^m)$, the signature $\Phi_0$ depends only on the
geodesic separation $d_{\mathbb{T}^m}(x_1,x_2)$ when $n=2$. For $m=1$ this separation determines the
shape up to translation and relabeling (indeed the orbit space is parametrized by a distance in
$[0,\pi]$), so $\Phi_0$ is injective for $n=2$.
For $m\ge 2$, by contrast, distinct displacement vectors can have the same length, so $\Phi_0$ is
\emph{not} injective even for $n=2$. This is an instance of symmetry mismatch: $\Phi_0$ depends only
on pairwise distances and therefore discards directional information in higher-dimensional tori.
\end{remark}

\paragraph{Summary.}
Persistence signatures $\Phi_k$ are stable and physically meaningful, but they are not complete invariants
of formation shape in general.  Non-injectivity is intrinsic (persistence compression) and can be amplified
by a deliberate modeling choice (quotienting by a subgroup $G$ rather than all ambient isometries).
These observations motivate identifying restricted regimes in which $\Phi_k$ \emph{does} separate the formations of interest; we develop one such regime in Section~\ref{sec:inverse-phase}.  They also suggest that enriching $\Phi_k$ (e.g., via more informative topological transforms or distributed constructions) is a natural direction for future work \cite{turner2014pht,solomon2023distributed}.

\section{A conditional inverse theorem in the phase-circle model}
\label{sec:inverse-phase}

Section~\ref{sec:separation} shows that persistence signatures $\Phi_k$ are not injective in general,
so no global lower bound of the form $d_{M,G}\lesssim d_B(\Phi_{\le K},\Phi_{\le K})$ can hold without
strong restrictions.  Nevertheless, in structured regimes that occur naturally in phasing problems
(e.g.\ satellites constrained to an arc and exhibiting a prescribed gap pattern), $\Phi_0$ can become
\emph{locally invertible up to a controlled constant}.  This section provides a concrete inverse
theorem in the simplest phase model $M=\mathbb{T}^1$ (a single angular coordinate), illustrating
how a ``margin'' hypothesis turns the one-sided stability bound of Theorem~\ref{thm:main-stability}
into a two-sided estimate on a restricted stratum.

\subsection{The phase-circle ambient model and canonical representatives}
\label{subsec:phase-circle-canonical}

We identify $\mathbb{T}^1:=\mathbb{R}/2\pi\mathbb{Z}$ with angles modulo $2\pi$, equipped with the
geodesic distance
\[
d_{\mathbb{T}}(\theta,\phi) := \min_{k\in\mathbb{Z}} |\theta-\phi+2\pi k|\in[0,\pi].
\]
We take $G=\mathbb{T}^1$ acting by translations (rotations):
\[
g\cdot \theta := \theta + g \quad (\mathrm{mod}\ 2\pi).
\]
Recall $\mathcal{S}_n(\mathbb{T}^1,\mathbb{T}^1)=(\mathbb{T}^1)^n/(\mathbb{T}^1\times S_n)$ and
$d_{\mathbb{T}^1,\mathbb{T}^1}=d_{M,G}$ from Definition~\ref{def:formation-distance}.

Because the circle metric is non-Euclidean at scale $\pi$ (wrap-around), we restrict attention to
formations supported inside a short arc, where the circle is \emph{locally} isometric to a line.

\begin{definition}[Semicircle-supported formations and anchored lift]
\label{def:semicircle-supported}
Let $[x]\in \mathcal{S}_n(\mathbb{T}^1,\mathbb{T}^1)$.
We say that $[x]$ is \emph{semicircle-supported} if there exists a representative
$x=(x_1,\dots,x_n)\in(\mathbb{T}^1)^n$ and a translation $g\in \mathbb{T}^1$ such that the set
$\{g x_1,\dots,g x_n\}$ is contained in some open arc of length $<\pi$ (equivalently, after translating,
in the standard arc $(0,\pi)\subset \mathbb{T}^1$).

For such a shape, define its \emph{anchored lifted representative}
\[
\mathrm{Lift}([x]) \;:=\; (\theta_1,\dots,\theta_n)\in \mathbb{R}^n
\]
as follows: choose a translate $g$ placing the points in $(0,\pi)$, lift to real angles
$0<\alpha_1<\cdots<\alpha_n<\pi$ (sorted increasingly), and then anchor by subtracting the minimum:
\[
\theta_i := \alpha_i-\alpha_1,
\qquad i=1,\dots,n.
\]
Thus $\theta_1=0$ and $0=\theta_1<\theta_2<\cdots<\theta_n<\pi$.
Since we quotient by translations and relabeling, this anchored lift is well-defined on semicircle-supported
shapes.
\end{definition}

\begin{remark}
The semicircle-supported condition is restrictive, but it captures an important ``effective 1D''
regime in phase control: when all agents lie within an arc of length $<\pi$, the wrap-around
distance is inactive and the geometry reduces to that of a subset of $\mathbb{R}$.
This is the regime in which explicit inverse inequalities can be proved directly from $\Phi_0$.
\end{remark}

\begin{definition}[Gap vector]
\label{def:gap-vector}
For a semicircle-supported shape $[x]$ with $\mathrm{Lift}([x])=(\theta_1,\dots,\theta_n)$, define its
\emph{gap vector} $g([x])\in \mathbb{R}^{n-1}$ by
\[
g_i([x]) := \theta_{i+1}-\theta_i,\qquad i=1,\dots,n-1.
\]
\end{definition}

\subsection{$H_0$ persistence in the semicircle regime: MST equals consecutive gaps}
\label{subsec:phi0-gaps}

The next lemma identifies the $0$D persistence diagram $\Phi_0([x])$ with the consecutive gaps
$g([x])$, using the general MST characterization from Proposition~\ref{prop:phi0-mst}.

\begin{lemma}[Reduction to a line metric]
\label{lem:line-reduction}
If $[x]$ is semicircle-supported with $\mathrm{Lift}([x])=(\theta_1,\dots,\theta_n)$, then for all
$i,j$,
\[
d_{\mathbb{T}}(\theta_i,\theta_j) = |\theta_i-\theta_j|.
\]
Hence the induced finite metric space $X_x$ (Definition~\ref{def:induced-metric-space}) is isometric to the
subset $\{\theta_1,\dots,\theta_n\}\subset \mathbb{R}$ with the standard metric.
\end{lemma}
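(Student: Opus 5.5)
The plan is to compute the circle distance directly from its definition, using the range of the anchored lift. By Definition~\ref{def:semicircle-supported}, the anchored lift satisfies $0=\theta_1<\theta_2<\cdots<\theta_n<\pi$. So for any $i,j$ the real difference $t:=\theta_i-\theta_j$ lies in the open interval $(-\pi,\pi)$.

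Next we evaluate $d_{\mathbb{T}}(\theta_i,\theta_j)=\min_{k\in\mathbb{Z}}|t+2\pi k|$. The $k=0$ term equals $|t|<\pi$. For $k\neq 0$, the reverse triangle inequality gives $|t+2\pi k|\ge 2\pi|k|-|t|\ge 2\pi-|t|>\pi>|t|$. Hence the minimum is attained at $k=0$, and $d_{\mathbb{T}}(\theta_i,\theta_j)=|\theta_i-\theta_j|$.

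It remains to transfer this to $X_x$. By Definition~\ref{def:induced-metric-space}, $X_x$ is determined up to isometry by the orbit $[x]$, because translations are isometries of $\mathbb{T}^1$ and relabeling only permutes indices. The anchored lift is obtained from a representative by a translation $g$, a sorting permutation, and a further translation by $-\alpha_1$. Reducing $\theta_1,\dots,\theta_n$ mod $2\pi$ therefore gives a representative of $[x]$, relabeled by the sorting permutation. Consequently the map $i\mapsto\theta_{\tau(i)}$, where $\tau$ is the sorting permutation, is an isometry from $X_x$ onto $\{\theta_1,\dots,\theta_n\}\subset\mathbb{R}$ with the standard metric.

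No step is a real obstacle; the argument only needs the strict bound $\theta_n<\pi$. One point deserves care: well-definedness of the lift is asserted in Definition~\ref{def:semicircle-supported}, and the lemma should not depend on it. The computation above works for any lift produced by that recipe, so we do not need to verify that different choices of $g$ give the same $\theta$.
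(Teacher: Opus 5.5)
Your proposal is correct and follows the paper's argument: the anchored lift gives $|\theta_i-\theta_j|<\pi$, so the trivial wrap $k=0$ attains the minimum in $d_{\mathbb{T}}$. You also verify explicitly that every $k\neq 0$ gives a larger value, and that the translated, sorted representative yields the isometry $X_x\cong\{\theta_1,\dots,\theta_n\}$; the paper leaves both points implicit.
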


\begin{proof}
Because $0\le \theta_i,\theta_j<\pi$, the absolute difference satisfies $|\theta_i-\theta_j|<\pi$.
Thus the minimizing wrap in the definition of $d_{\mathbb{T}}$ is the trivial one, and
$d_{\mathbb{T}}(\theta_i,\theta_j)=|\theta_i-\theta_j|$.
\end{proof}

\begin{lemma}[MST structure in one dimension]
\label{lem:mst-1d}
Let $0\le \theta_1<\cdots<\theta_n$ be points on the real line with the standard metric.
Then every minimum spanning tree on $\{\theta_1,\dots,\theta_n\}$ (with complete-graph edge weights
$|\theta_i-\theta_j|$) is the path graph connecting consecutive points, and its edge lengths are
\[
\ell_i = \theta_{i+1}-\theta_i,\qquad i=1,\dots,n-1.
\]
\end{lemma}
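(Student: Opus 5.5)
The plan is to show that no minimum spanning tree can contain a non-consecutive edge. After that, a counting argument forces every MST to be the path graph.

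First I would record the basic weight comparison. For $i<j$ with $j\ge i+2$, the edge $\{\theta_i,\theta_j\}$ has weight
\[
|\theta_j-\theta_i| = \sum_{k=i}^{j-1} (\theta_{k+1}-\theta_k).
\]
Because the points are strictly increasing, each summand is strictly positive, and there are at least two summands. Hence the weight of $\{\theta_i,\theta_j\}$ is \emph{strictly} larger than every consecutive gap $\theta_{k+1}-\theta_k$ with $i\le k\le j-1$. Strictness is what makes the claim hold for \emph{every} MST, even when several consecutive gaps are equal.

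Second, I would use a standard exchange (cut) argument. Suppose some MST $T$ contains a non-consecutive edge $e=\{\theta_i,\theta_j\}$ with $j\ge i+2$. Deleting $e$ from $T$ splits the vertex set into two components, one containing $\theta_i$ and the other containing $\theta_j$. Walking along the consecutive path $\theta_i,\theta_{i+1},\dots,\theta_j$, some consecutive edge $e'=\{\theta_k,\theta_{k+1}\}$ with $i\le k<j$ must join the two components. Then $T-e+e'$ is a spanning tree, and by the first step its total weight is strictly smaller than that of $T$. This contradicts minimality, so every MST uses only consecutive edges.

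Third, I would conclude by counting. A spanning tree on $n$ vertices has exactly $n-1$ edges, and there are exactly $n-1$ consecutive edges. Therefore every MST equals the path graph, and its edge lengths are $\ell_i=\theta_{i+1}-\theta_i$. Their multiset is the gap vector, which is the form needed to combine with Proposition~\ref{prop:phi0-mst} and Lemma~\ref{lem:line-reduction}.

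The only delicate point is the strict inequality in the first step. It is what excludes alternative MSTs when gaps tie, and it relies on the points being distinct (the hypothesis $\theta_1<\cdots<\theta_n$). Otherwise the argument is routine.
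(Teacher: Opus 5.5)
Your proof is correct, and it takes the dual route to the paper's. The paper applies the cut property to the $n-1$ prefix cuts $\{\theta_1,\dots,\theta_i\}\mid\{\theta_{i+1},\dots,\theta_n\}$. The unique lightest crossing edge of such a cut is $\{\theta_i,\theta_{i+1}\}$, so every MST must \emph{contain} each consecutive edge, and since these already form a spanning tree, the MST is the path.

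You instead show that every MST \emph{excludes} each non-consecutive edge, and then finish by counting edges. Your exclusion step is an inline exchange argument; in effect it is the cycle property applied to the cycle formed by $\{\theta_i,\theta_j\}$ and the consecutive path between $\theta_i$ and $\theta_j$.

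Both arguments hinge on the same strict inequality, which comes from the points being distinct. The paper's version is shorter because it cites the cut property as a black box. Yours is self-contained: it proves the exchange step directly instead of relying on a general MST theorem.
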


\begin{proof}
Consider the cut $A=\{\theta_1,\dots,\theta_i\}$ and $A^c=\{\theta_{i+1},\dots,\theta_n\}$.
The minimum-weight edge crossing this cut is uniquely $\{\theta_i,\theta_{i+1}\}$, of weight
$\theta_{i+1}-\theta_i$, because any other crossing edge has larger length.
By the cut property of MSTs, every MST must contain $\{\theta_i,\theta_{i+1}\}$ for each $i$.
These $(n-1)$ edges form the consecutive path and already constitute a spanning tree.
\end{proof}

\begin{proposition}[$\Phi_0$ determines the gap multiset in the semicircle regime]
\label{prop:phi0-gaps}
Let $[x]\in \mathcal{S}_n(\mathbb{T}^1,\mathbb{T}^1)$ be semicircle-supported, and let
$g([x])=(g_1,\dots,g_{n-1})$ be its gap vector.
Then the $0$D Vietoris--Rips persistence diagram $\Phi_0([x])=\mathrm{Dgm}_0(X_x)$ consists of the
off-diagonal points
\[
(0,g_1/2),\ (0,g_2/2),\ \dots,\ (0,g_{n-1}/2)
\]
(counted with multiplicity), plus the essential class $(0,+\infty)$.
\end{proposition}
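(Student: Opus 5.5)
The proof will be a short composition of three results already established: Lemma~\ref{lem:line-reduction}, Lemma~\ref{lem:mst-1d}, and Proposition~\ref{prop:phi0-mst}. We first pass from the shape $[x]$ to its anchored lift. By Definition~\ref{def:induced-metric-space}, $X_x$ depends only on $[x]$, so we may compute it from the representative with coordinates $\theta_1,\dots,\theta_n$ from $\mathrm{Lift}([x])$. This is legitimate because the translation $g$ and the sorting relabeling in Definition~\ref{def:semicircle-supported} are elements of $G\times S_n$. By Lemma~\ref{lem:line-reduction}, $X_x$ is then isometric to the subset $\{\theta_1,\dots,\theta_n\}\subset\mathbb{R}$ with the standard metric.

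Next we invoke Proposition~\ref{prop:phi0-mst}. Vietoris--Rips persistence diagrams are invariant under isometries of finite metric spaces, since isometric spaces have isomorphic filtrations. Hence $\mathrm{Dgm}_0(X_x)$ consists of the points $(0,\ell_i/2)$, where $\ell_1,\dots,\ell_{n-1}$ is the multiset of MST edge lengths of $\{\theta_1,\dots,\theta_n\}$, together with one essential class $(0,+\infty)$. The points are strictly ordered, $0=\theta_1<\cdots<\theta_n$, so Lemma~\ref{lem:mst-1d} applies. Every MST is then the consecutive path, and its edge lengths are exactly $\theta_{i+1}-\theta_i=g_i([x])$ by Definition~\ref{def:gap-vector}. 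Substituting gives the claimed list $(0,g_i/2)$ counted with multiplicity.

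The only step needing care is the well-definedness of the multiset. Proposition~\ref{prop:phi0-mst} is phrased in terms of "an" MST, and its edge multiset is well known to be independent of the choice. Even without that fact, Lemma~\ref{lem:mst-1d} shows the MST is unique here, so the multiset is unambiguous. A second minor point is that the gap vector must itself be an invariant of $[x]$. Different admissible translates $g$ could in principle give different lifts, but anchoring by subtracting $\alpha_1$ and sorting removes this dependence. In any case, the conclusion only uses the multiset of gaps, which is intrinsic to $X_x$ via the MST.

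I expect no substantive obstacle: the argument is essentially a direct citation chain.
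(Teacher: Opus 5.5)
Your proposal is correct and follows the paper's proof essentially step for step. Like the paper, you reduce to the line metric via Lemma~\ref{lem:line-reduction}, identify the MST as the consecutive path via Lemma~\ref{lem:mst-1d}, and read off the death times $g_i/2$ from Proposition~\ref{prop:phi0-mst}; your extra remarks (isometry invariance of $\mathrm{Dgm}_0$, uniqueness of the MST, well-definedness of the lift) spell out points the paper leaves implicit without changing the argument.
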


\begin{proof}
By Lemma~\ref{lem:line-reduction} we may work on the line with points $\theta_1<\cdots<\theta_n$.
By Lemma~\ref{lem:mst-1d} the MST edge lengths are exactly $g_i=\theta_{i+1}-\theta_i$.
Proposition~\ref{prop:phi0-mst} then identifies $\mathrm{Dgm}_0(X_x)$ with these MST edge lengths
via death times at $g_i/2$.
\end{proof}

\subsection{A gap-labeling (margin) condition}
\label{subsec:gap-labeling}

As emphasized in Section~\ref{sec:separation}, $\Phi_0$ records only the \emph{multiset} of MST
edge lengths, so additional assumptions are needed to compare \emph{specific} gaps between two formations.
In phase applications it is natural to assume that each consecutive gap falls into a distinct ``slot''
(e.g.\ a designed phasing pattern with separated gap values).

\begin{definition}[Gap-labeling condition]
\label{def:gap-labeling-condition}
Fix parameters $\rho>0$ and $\gamma>0$ and choose disjoint open intervals
$I_1,\dots,I_{n-1}\subset (0,\pi)$ satisfying
\[
\mathrm{dist}(I_i,I_j) \ge 2\gamma \quad \text{for all } i\neq j,
\qquad\text{and}\qquad
\inf I_i \ge \rho \quad \text{for all } i.
\]
A semicircle-supported shape $[x]$ satisfies the \emph{$(\rho,\gamma)$ gap-labeling condition} (with respect
to $\{I_i\}$) if for its gap vector $g([x])=(g_1,\dots,g_{n-1})$ we have
\[
g_i([x]) \in I_i \quad\text{for each } i=1,\dots,n-1.
\]
\end{definition}

\begin{remark}[Interpretation]
The lower bound $\inf I_i\ge \rho$ enforces collision avoidance on the lifted line, while the
pairwise separation $\mathrm{dist}(I_i,I_j)\ge 2\gamma$ is a ``spectral gap'' hypothesis ensuring that
each gap value identifies its index $i$.  This hypothesis is compatible with phased formations that
encode slot identity through distinct separations (or through a known design template plus bounded error).
\end{remark}

\subsection{A conditional inverse inequality for $\Phi_0$ on the phase circle}
\label{subsec:inverse-theorem}

We now obtain an explicit conditional lower bound, i.e.\ an upper bound on $d_{\mathbb{T}^1,\mathbb{T}^1}$
in terms of the bottleneck distance between $0$D diagrams.

\begin{lemma}[Bottleneck control of labeled gaps under a margin]
\label{lem:bottleneck-controls-gaps}
Let $[x]$ and $[y]$ be semicircle-supported shapes satisfying the $(\rho,\gamma)$ gap-labeling condition
with respect to the same disjoint intervals $\{I_i\}_{i=1}^{n-1}$.
Let
\[
\varepsilon := d_B\big(\Phi_0([x]),\Phi_0([y])\big).
\]
If $\varepsilon < \min\{\rho,\gamma\}/4$, then after identifying the unique off-diagonal point in
$\Phi_0([x])$ with death time in $I_i/2$ as $(0,g_i([x])/2)$ (and similarly for $[y]$),
we have
\[
|g_i([x]) - g_i([y])| \le 2\varepsilon
\qquad\text{for all } i=1,\dots,n-1.
\]
\end{lemma}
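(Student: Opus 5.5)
The plan is to read off the gap comparison directly from an optimal bottleneck matching. We use the explicit description of $\Phi_0$ in the semicircle regime (Proposition~\ref{prop:phi0-gaps}) and let the margin hypotheses force that matching to respect the gap labels. By Proposition~\ref{prop:phi0-gaps}, $\Phi_0([x])$ consists of the off-diagonal points $p_i:=(0,g_i([x])/2)$ for $i=1,\dots,n-1$, together with one essential point $(0,+\infty)$. Likewise $\Phi_0([y])$ consists of $q_j:=(0,g_j([y])/2)$ and one essential point. Since $g_i([x])\in I_i$ and the $I_i$ are pairwise disjoint, each $I_i/2$ contains the death time of exactly one off-diagonal point, namely $p_i$. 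This justifies the identification in the statement.

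Next we fix a matching. Both diagrams are finite, so the bottleneck infimum is attained. To avoid relying on that, we will instead take, for arbitrary $\eta>0$ with $\varepsilon+\eta<\min\{\rho,\gamma\}/4$, a partial matching $\mu$ of cost at most $\varepsilon':=\varepsilon+\eta$, and let $\eta\to0$ at the end. The essential points must be matched to each other, since any other pairing has infinite cost. It remains to show that $\mu$ pairs $p_i$ with $q_i$ for every $i$.

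The first step rules out matching to the diagonal. With the $\ell^\infty$ convention, the point $p_i$ lies at distance $g_i([x])/4\ge \rho/4>\varepsilon'$ from the diagonal. If the convention measures distance to the diagonal as the full persistence $g_i/2$, the bound is only better. So $p_i$ cannot be sent to the diagonal. The same holds for every $q_j$, so $\mu$ restricts to a bijection between $\{p_i\}$ and $\{q_j\}$.

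The second step shows this bijection preserves indices. Suppose $\mu(p_i)=q_j$. Then $\|p_i-q_j\|_\infty=\tfrac12|g_i([x])-g_j([y])|\le\varepsilon'$, so $|g_i([x])-g_j([y])|\le 2\varepsilon'<\gamma/2$. If $i\ne j$, then $g_i([x])\in I_i$ and $g_j([y])\in I_j$ with $\mathrm{dist}(I_i,I_j)\ge2\gamma$, which is a contradiction. Hence $j=i$, and $|g_i([x])-g_i([y])|\le 2(\varepsilon+\eta)$ for every $\eta$. Letting $\eta\to0$ gives the claimed bound $2\varepsilon$.

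The main point needing care is the first step: excluding diagonal matches. It depends on the bottleneck convention for the distance to the diagonal. I would state the $\ell^\infty$ convention explicitly and note that the hypothesis $\varepsilon<\rho/4$ is exactly what makes the argument work under it. Once that step is secured, the index-preservation step is a one-line consequence of the $2\gamma$ separation of the intervals.
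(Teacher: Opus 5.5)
Your proposal is correct and follows the same route as the paper. Both arguments use Proposition~\ref{prop:phi0-gaps} to write each diagram as the points $(0,g_i/2)$, exclude diagonal matches because $g_i/4\ge\rho/4>\varepsilon$, and exclude cross-index matches using the separation of the intervals $I_i$; your version is slightly more careful, with the $\eta$-approximate matching, explicit pairing of the essential classes, and the full $2\gamma$ separation where the paper quotes a weaker but still sufficient bound.
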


\begin{proof}
By Proposition~\ref{prop:phi0-gaps}, $\Phi_0([x])$ and $\Phi_0([y])$ each have $(n-1)$ off-diagonal points
of the form $(0,g_i/2)$, with $g_i\ge \rho$.

Because $\varepsilon<\rho/4$, no off-diagonal point can be optimally matched to the diagonal in the
bottleneck matching: matching $(0,g_i/2)$ to the diagonal has cost $g_i/4\ge \rho/4>\varepsilon$.
Hence the bottleneck matching induces a bijection between off-diagonal points.

Moreover, because the intervals $I_i/2$ are disjoint with separation at least $\gamma/2$, any matching that
pairs a death time from $I_i/2$ to one in $I_j/2$ with $i\neq j$ has cost at least $\gamma/4>\varepsilon$,
contradicting the definition of $d_B$.
Therefore the bottleneck matching pairs the unique death time in $I_i/2$ for $[x]$ to the unique death time
in $I_i/2$ for $[y]$, and the matched death times differ by at most $\varepsilon$.
Multiplying by $2$ yields $|g_i([x])-g_i([y])|\le 2\varepsilon$.
\end{proof}

\begin{theorem}[Conditional inverse theorem on the phase circle]
\label{thm:inverse-phase}
Fix disjoint intervals $\{I_i\}_{i=1}^{n-1}\subset (0,\pi)$ and parameters $\rho,\gamma>0$ as in
Definition~\ref{def:gap-labeling-condition}.
Let $[x],[y]\in \mathcal{S}_n(\mathbb{T}^1,\mathbb{T}^1)$ be semicircle-supported shapes satisfying the
$(\rho,\gamma)$ gap-labeling condition (with respect to the same $\{I_i\}$), and define
\[
\varepsilon := d_B\big(\Phi_0([x]),\Phi_0([y])\big).
\]
If $\varepsilon < \min\{\rho,\gamma\}/4$, then
\begin{equation}
\label{eq:inverse-bound}
d_{\mathbb{T}^1,\mathbb{T}^1}([x],[y])
\;\le\;
2(n-1)\,\varepsilon.
\end{equation}
In particular, on this restricted phase stratum,
\[
d_B\big(\Phi_0([x]),\Phi_0([y])\big)
\;\le\;
d_{\mathbb{T}^1,\mathbb{T}^1}([x],[y])
\;\le\;
2(n-1)\, d_B\big(\Phi_0([x]),\Phi_0([y])\big),
\]
so $\Phi_0$ is bi-Lipschitz up to a factor $2(n-1)$.
\end{theorem}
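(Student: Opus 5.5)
The plan is to reduce everything to the anchored lifts and an explicit alignment. By Lemma~\ref{lem:bottleneck-controls-gaps}, the hypothesis $\varepsilon<\min\{\rho,\gamma\}/4$ gives the labeled gap control $|g_i([x])-g_i([y])|\le 2\varepsilon$ for every $i=1,\dots,n-1$. Let $\mathrm{Lift}([x])=(\theta_1,\dots,\theta_n)$ and $\mathrm{Lift}([y])=(\phi_1,\dots,\phi_n)$. Both lifts are anchored with $\theta_1=\phi_1=0$, so telescoping gives
\[
|\theta_j-\phi_j|=\Big|\sum_{i<j}\big(g_i([x])-g_i([y])\big)\Big|\le 2(j-1)\varepsilon\le 2(n-1)\varepsilon .
\]

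Next I will produce an admissible pair $(g,\sigma)$ in Definition~\ref{def:formation-distance}. Take representatives $x$ and $y$ whose coordinates, after translation and relabeling, are exactly $\theta_j$ and $\phi_j$ mod $2\pi$. These exist because the anchored lift is obtained from a representative by a translation and a sorting permutation, and both operations are absorbed into the infimum over $G\times S_n$. Using the identity translation and the identity matching $j\mapsto j$ gives
\[
d_{\mathbb{T}^1,\mathbb{T}^1}([x],[y])\le\max_j d_{\mathbb{T}}(\theta_j,\phi_j)\le\max_j|\theta_j-\phi_j|\le 2(n-1)\varepsilon .
\]
Here $d_{\mathbb{T}}(a,b)\le|a-b|$ always holds for real lifts (take $k=0$ in the definition), so no wrap-around analysis is needed. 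This proves \eqref{eq:inverse-bound}. The left inequality of the two-sided estimate is Theorem~\ref{thm:main-stability} with $k=0$.

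The main step needing care is the bookkeeping that the slot labels $i$ used in Lemma~\ref{lem:bottleneck-controls-gaps} coincide with the positional indices of consecutive gaps in both lifts. This holds because the gap-labeling condition forces $g_i\in I_i$ for the $i$-th consecutive gap of each shape, and the disjointness of the $I_i$ identifies the diagram point in $I_i/2$ with that positional gap. Once this is granted, the telescoping and the choice of the identity alignment are routine.

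An optional refinement, not needed for the stated bound, is to improve the constant to $(n-1)\varepsilon$. One would translate by the midpoint of the range of $\theta_j-\phi_j$, which halves the maximum.
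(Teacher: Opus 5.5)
Your proof is correct and is essentially the paper's argument: labeled gap control from Lemma~\ref{lem:bottleneck-controls-gaps}, telescoping along the anchored lifts with $\theta_1=\phi_1=0$, and the order-preserving matching with trivial translation to get $d_{\mathbb{T}^1,\mathbb{T}^1}([x],[y])\le 2(n-1)\varepsilon$. Your other three points are also valid small improvements over the paper's write-up: that $d_{\mathbb{T}}(a,b)\le|a-b|$ holds for arbitrary real lifts, so wrap-around never needs discussing; that the left inequality is explicitly Theorem~\ref{thm:main-stability}; and that translating by the midpoint of the range of $\theta_j-\phi_j$ sharpens the constant to $(n-1)\varepsilon$.
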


\begin{proof}
Let $\mathrm{Lift}([x])=(\theta_1,\dots,\theta_n)$ and $\mathrm{Lift}([y])=(\phi_1,\dots,\phi_n)$ be the
anchored lifted representatives from Definition~\ref{def:semicircle-supported}, with
$0=\theta_1<\cdots<\theta_n<\pi$ and $0=\phi_1<\cdots<\phi_n<\pi$.
Since wrap-around is inactive on $[0,\pi)$, we have
$d_{\mathbb{T}}(\theta_i,\phi_i)=|\theta_i-\phi_i|$.

By telescoping sums,
\[
\theta_i = \sum_{j=1}^{i-1} g_j([x]),
\qquad
\phi_i = \sum_{j=1}^{i-1} g_j([y]).
\]
Hence
\[
|\theta_i-\phi_i|
\le \sum_{j=1}^{i-1} |g_j([x]) - g_j([y])|.
\]
By Lemma~\ref{lem:bottleneck-controls-gaps}, each $|g_j([x]) - g_j([y])|\le 2\varepsilon$, so
\[
|\theta_i-\phi_i|
\le 2(i-1)\varepsilon
\le 2(n-1)\varepsilon
\quad\text{for all } i.
\]
Now match the $i$th smallest point of $\mathrm{Lift}([x])$ to the $i$th smallest point of
$\mathrm{Lift}([y])$; this defines a permutation $\sigma\in S_n$.
Choose representatives $x',y'\in(\mathbb{T}^1)^n$ of $[x]$ and $[y]$ whose anchored lifts are
$(\theta_1,\dots,\theta_n)$ and $(\phi_1,\dots,\phi_n)$ in the corresponding (increasing) order.
Then
\[
\max_i d_{\mathbb{T}}(x'_i, y'_{\sigma(i)}) \le 2(n-1)\varepsilon,
\]
so by Definition~\ref{def:formation-distance},
$d_{\mathbb{T}^1,\mathbb{T}^1}([x],[y])\le 2(n-1)\varepsilon$.
\end{proof}

\begin{remark}[Why the hypotheses matter]
The semicircle-supported assumption ensures the phase metric reduces locally to a line metric
(Lemma~\ref{lem:line-reduction}), so $\Phi_0$ is governed by consecutive phase gaps
(Proposition~\ref{prop:phi0-gaps}).  The gap-labeling condition is the essential ``margin'':
it converts the unlabeled multiset in $\Phi_0$ into stable \emph{identified} gaps, enabling inversion.
Without a labeling margin, different phase arrangements can share the same $\Phi_0$ even in the semicircle
regime (Section~\ref{subsec:0d-mst}).

Finally, the constant $2(n-1)$ arises from the cumulative nature of reconstruction from consecutive gaps;
it can be improved if additional anchors are available (e.g.\ a fixed leader phase or multiple known
reference agents), or if one uses richer invariants than $\Phi_0$ (cf.\ Section~\ref{sec:separation}).
\end{remark}

\begin{remark}[Extension to higher-dimensional phase tori]
The same argument applies to configurations in $\mathbb{T}^m$ that are constrained to a known
one-dimensional subtorus (or otherwise admit a chart in which the relevant degrees of freedom are
effectively one-dimensional and wrap-around is inactive).
In such cases, one replaces the semicircle-supported hypothesis by a ``convex-chart'' hypothesis and
works with ordered coordinates along the active phase direction.
\end{remark}

\section{Discussion}
\label{sec:discussion}

This manuscript develops a metric-geometric framework for comparing and tracking swarm/formation
configurations that is (i) invariant to physically meaningful symmetries and relabeling,
(ii) stable under perturbations, and (iii) compatible with persistent-homology summaries.
We briefly discuss what the main results say, what they do \emph{not} say, and how the framework
can be used and extended.

\paragraph{A GH-relaxed, symmetry-aware formation metric.}
Our starting point is the quotient shape space
$\mathcal{S}_n(M,G)=M^n/(G\times S_n)$ equipped with the formation matching metric $d_{M,G}$
(Definition~\ref{def:formation-distance}).
Conceptually, $d_{M,G}$ should be viewed as a \emph{structured relaxation} of the
Gromov--Hausdorff (GH) comparison between induced inter-agent metric spaces:
Lemma~\ref{lem:GH-upper-bound} shows $d_{\mathrm{GH}}(X_x,X_y)\le d_{M,G}([x],[y])$.
Unlike GH, which optimizes over arbitrary correspondences and is computationally difficult in general
\cite{memoli2012propertiesgh,agarwal2018gh_trees}, $d_{M,G}$ restricts the comparison to
ambient symmetries (chosen by the modeler) and bijective agent matchings.  This restriction is aligned
with many engineered swarm settings in which agents are interchangeable but the fleet size is fixed,
and in which there are clear global ``nuisance transformations'' (frame choices, phase origins).

\paragraph{Stability of persistence signatures for reconfiguration monitoring.}
The main stability theorem (Theorem~\ref{thm:main-stability}) establishes that the Vietoris--Rips
persistence diagram of the induced inter-agent metric space is $1$-Lipschitz with respect to $d_{M,G}$:
\[
d_B\big(\Phi_k([x]),\Phi_k([y])\big)\le d_{M,G}([x],[y]).
\]
This bound follows by composing (i) the GH stability of Vietoris--Rips persistence
\cite{chazal2009gromov,chazal2014persistence} with (ii) the GH upper bound from
Lemma~\ref{lem:GH-upper-bound}.  From an applications viewpoint, this is the central guarantee:
small formation perturbations---as measured in the natural, symmetry-aware metric $d_{M,G}$---cannot
produce large topological changes in persistence diagrams.
When formations evolve in time, Corollary~\ref{cor:path-lipschitz} yields a clean rate-of-change bound
on diagram variation, providing a mathematically controlled surrogate for ``reconfiguration events.''

\paragraph{Why the spherical and torus/phase models matter.}
Although Theorem~\ref{thm:main-stability} is ambient-model agnostic, the two example families
(Examples~\ref{ex:sphere}--\ref{ex:torus}) serve distinct roles.
The spherical model $(M,G)=(\mathbb{S}^2,\mathrm{SO}(3))$ captures geometry on a sphere (e.g.\ subsatellite
points or viewing directions), with invariance to global frame rotation as a core requirement, consistent
with classical constellation geometry \cite{walker1984constellations}.
The torus/phase model $(M,G)=(\mathbb{T}^m,\mathbb{T}^m)$ captures phasing patterns described by angular
variables and lattice constructions \cite{avendano2013lattice2d,walker1984constellations}, with invariance
to global phase origin (translation on $\mathbb{T}^m$) as the natural quotient.  These two models also
illustrate that the symmetry group $G$ is not merely an abstract choice: it encodes what is considered
``physically the same'' formation in a given problem.

\paragraph{Geometric legitimacy of the quotient space.}
A key contribution beyond stability is that the orbit space
$\big(\mathcal{S}_n(M,G),d_{M,G}\big)$ is itself a well-behaved metric space under mild assumptions.
Section~\ref{sec:quotient-geometry} shows that when $M$ is compact (resp.\ complete) and $G$ is compact,
the quotient is compact (resp.\ complete) (Theorems~\ref{thm:compactness}--\ref{thm:completeness}),
and when $M$ is geodesic, so is the quotient (Theorem~\ref{thm:quotient-geodesic}; cf.\ standard metric
geometry references such as \cite{burago2001metric}).  Moreover, the singularities of the quotient are not
mysterious pathologies but follow the familiar orbit-type and collision-stratum patterns known from
configuration-space topology and transformation group theory (Section~\ref{subsec:singularities-conf};
\cite{fadell1962configuration,bredon1972compact,palais1961slices}).
This structure justifies treating reconfiguration as motion in a stratified, geodesic metric space
rather than as an \textit{ad hoc} sequence of point clouds.

\paragraph{Expressivity limits: what is lost, and why it is unavoidable.}
Section~\ref{sec:separation} demonstrates that persistence signatures are \emph{not} complete invariants
of formation shape.  There are two distinct mechanisms:
\begin{enumerate}
\item \emph{Symmetry mismatch:} $\Phi_k$ depends only on pairwise distances and is therefore invariant under
the full ambient isometry group $\mathrm{Isom}(M)$, not merely the chosen subgroup $G$
(Lemma~\ref{lem:isom-invariance}).  If $G\subsetneq \mathrm{Isom}(M)$, distinct shapes in
$\mathcal{S}_n(M,G)$ may have identical persistence (Proposition~\ref{prop:strictness}).
\item \emph{Persistence compression:} even after quotienting by full isometries, persistence diagrams
are inherently lossy summaries of a metric space.  Non-injectivity can occur in families and persist
under generic conditions \cite{smithkurlin2024generic1d}; and inverse images of persistence summaries
can be large in natural geometric distances \cite{solomon2023distributed}.
\end{enumerate}
These results are not merely negative.  They clarify what one \emph{can} and \emph{cannot} expect:
persistence is a stable monitoring and comparison tool, but is not a universal identifier of formation geometry.

\paragraph{When inversion becomes possible: design margins and the phase-circle theorem.}
The conditional inverse theorem in Section~\ref{sec:inverse-phase} shows that in a structured phasing
regime, $\Phi_0$ becomes locally invertible (up to a controlled constant).  On the phase circle,
if points lie in a semicircle and satisfy a gap-labeling margin condition
(Definition~\ref{def:gap-labeling-condition}), then the bottleneck distance between $\Phi_0$ diagrams
controls the formation distance:
\[
d_{\mathbb{T}^1,\mathbb{T}^1}([x],[y]) \le 2(n-1)\, d_B\big(\Phi_0([x]),\Phi_0([y])\big)
\qquad
\text{(Theorem~\ref{thm:inverse-phase}).}
\]
From a design perspective, this theorem formalizes a common engineering intuition:
\emph{margins enable identifiability.}  When a phasing pattern is engineered so that different gaps occupy
distinct ``slots'' separated by a stability margin, the coarse $\Phi_0$ signature becomes informative enough
to control the underlying configuration up to translation and relabeling.

\paragraph{Algorithmic considerations and computational tradeoffs.}
Our analysis is primarily geometric, but the GH-relaxation viewpoint is also computationally motivated.
Exact GH computation is difficult in general \cite{memoli2012propertiesgh,agarwal2018gh_trees}, while
$d_{M,G}$ reduces the comparison to an optimization over (i) the group $G$ and (ii) a bijection
$\sigma\in S_n$.  For fixed $g\in G$, the best $\sigma$ is an assignment problem with costs
$d_M(gx_i,y_j)$, and thus amenable to polynomial-time solvers (e.g.\ Hungarian-style methods),
whereas the remaining optimization over $g$ is a continuous (often nonconvex) problem on a compact group.

\begin{remark}[Certified upper bounds from feasible alignments]
For any $g\in G$ and $\sigma\in S_n$, define the feasible alignment cost
\[
C_{g,\sigma}(x,y)\;:=\;\max_{1\le i\le n} d_M\!\big(gx_i,\,y_{\sigma(i)}\big).
\]
By definition, $d_{M,G}([x],[y])\le C_{g,\sigma}(x,y)$, and Theorem~\ref{thm:main-stability} gives
\[
d_B\big(\Phi_k([x]),\Phi_k([y])\big)\le d_{M,G}([x],[y])\le C_{g,\sigma}(x,y)
\qquad\text{for all }k\ge 0.
\]
Thus even approximate solvers that return a feasible $(g,\sigma)$ provide a \emph{certified} upper bound
on the discrepancy between persistence signatures. If an algorithm produces $(g,\sigma)$ with
$C_{g,\sigma}(x,y)\le d_{M,G}([x],[y])+\eta$, then $C_{g,\sigma}$ is also an additive-$\eta$
approximation of $d_{M,G}$.
\end{remark}

In settings where computational guarantees are required, one may also compare with other intrinsic
but more tractable distances between metric(-measure) spaces, such as Gromov--Wasserstein
\cite{memoli2011gromovwasserstein}, or with polynomial-time computable metrics for unordered point clouds
\cite{kurlin2024atomicclouds}.  Exploring principled algorithmic approximations for $d_{M,G}$, together
with end-to-end stability bounds, is a natural direction for future work.

\paragraph{Outlook and extensions.}
Several extensions are suggested by the theory developed here.
First, many swarms exhibit changing membership (agent births/deaths); accommodating this would require
relaxing the bijection constraint in $d_{M,G}$ (e.g.\ partial matchings or measure-based variants),
and then re-establishing stability links to persistence.
Second, richer topological signatures can improve expressivity: the persistent homology transform
\cite{turner2014pht} and distributed/invertible constructions \cite{solomon2023distributed} indicate
promising routes toward inverse theorems beyond the restricted phase model.
Third, the quotient-geometry viewpoint suggests studying reconfiguration planning as path geometry on a
stratified space, potentially enabling complexity bounds and new variational formulations for
formation transitions.

\section{Conclusion}
\label{sec:conclusion}

We introduced a symmetry-aware, relabeling-invariant metric space of swarm configurations,
$\big(\mathcal{S}_n(M,G),d_{M,G}\big)$, and connected it to persistent homology in a way that is both
geometrically principled and stable.

\paragraph{Main takeaways.}
\begin{enumerate}
\item The formation distance $d_{M,G}$ is a structured, physically meaningful relaxation of
Gromov--Hausdorff comparisons (Lemma~\ref{lem:GH-upper-bound}), tailored to fixed-size swarms with
clear ambient symmetries.
\item Vietoris--Rips persistence diagrams of the induced inter-agent metric space define stable swarm
signatures: $\Phi_k$ is $1$-Lipschitz with respect to $d_{M,G}$ for all homological degrees $k$
(Theorem~\ref{thm:main-stability}), yielding controlled variation along reconfiguration paths
(Corollary~\ref{cor:path-lipschitz}).
\item The quotient formation space itself has robust metric geometry: under compactness/completeness
assumptions on $M$ and $G$, it is compact/complete and geodesic, with stratified singularities
described by collision loci and orbit types (Section~\ref{sec:quotient-geometry}).
\item Persistence signatures are not complete invariants of formation shape in general
(Section~\ref{sec:separation}), but they can become locally invertible on structured strata where
margin conditions provide identifiability; the phase-circle inverse theorem
(Theorem~\ref{thm:inverse-phase}) gives an explicit instance of such a regime.
\end{enumerate}

\paragraph{Closing perspective.}
Taken together, these results support a concrete program for topology-informed analysis of swarmed
systems: (i) choose an ambient model $(M,d_M)$ and symmetry group $G$ that match the physics,
(ii) compare formations in the resulting quotient metric $d_{M,G}$, and (iii) monitor/cluster
reconfigurations using persistence signatures that inherit stability from the underlying metric geometry.
The spherical and phase-torus examples illustrate how this program can be adapted to qualitatively
different swarm geometries relevant to satellite constellations and formation-like motion.

\printbibliography

\end{document}